\documentclass[12pt]{article}

\RequirePackage{amsthm,amsmath,amsfonts,amssymb}

\usepackage{enumitem}
\usepackage{cancel}
\usepackage{times}
\usepackage{bbm}
\usepackage{cite}
\usepackage{color, soul}
\usepackage[paper=a4paper, top=2.5cm, bottom=2.5cm, left=2.5cm, right=2.5cm]{geometry}
\usepackage{wasysym}
\usepackage{hyperref}

\hypersetup{
	colorlinks   = true,
	citecolor    = black,
	linkcolor    = blue
}

\usepackage{wasysym}

\def\qed{\hfill $\vcenter{\hrule height .3mm
\hbox {\vrule width .3mm height 2.1mm \kern 2mm \vrule width .3mm
height 2.1mm} \hrule height .3mm}$ \bigskip}

\def \Sph{\mathbb{S}^{n-1}}
\def \Spd{\mathbb{S}^{d-1}}
\def \RR {\mathbb R}
\def \NN {\mathbb N}
\def \EE {\mathbb E}

\def \PP {\mathbb P}
\def \eps {\varepsilon}
\def \vphi {\varphi}

\def \ff {g}

\newcommand\norm[1]{\left\lVert#1\right\rVert}
\newcommand\inner[2]{\langle#1,#2\rangle}
\newtheorem{theorem}{Theorem}
\newtheorem{lemma}[theorem]{Lemma}

\newtheorem{claim}[theorem]{Claim}

\theoremstyle{definition}
\newtheorem{definition}[theorem]{Definition}
\newtheorem{example}[theorem]{Example}
\theoremstyle{remark}
\newtheorem{remark}[theorem]{Remark}
\newtheorem*{remark*}{Remark}

\long\def\symbolfootnotetext[#1]#2{\begingroup
\def\thefootnote{\fnsymbol{footnote}}\footnotetext[#1]{#2}\endgroup}

\title{Community detection and percolation of information in a geometric setting}
\author{Ronen Eldan\thanks{Supported by a European Research Council Starting Grant (ERC StG) and by an Israel Science Foundation Grant no. 715/16.} \\
	Weizmann Institute
	\and Dan Mikulincer\thanks{Supported by an Azrieli foundation fellowship.} \\
	Weizmann Institute 
	\and Hester Pieters\thanks{Supported by an Israel Science Foundation Grant no. 715/16} \\
	Weizmann Institute
}
\begin{document}
\maketitle

\begin{abstract}
We make the first steps towards generalizing the theory of stochastic block models, in the sparse regime, towards a model where the discrete community structure is replaced by an underlying geometry. We consider a geometric random graph over a homogeneous metric space where the probability of two vertices to be connected is an arbitrary function of the distance. We give sufficient conditions under which the locations can be recovered (up to an isomorphism of the space) in the sparse regime. Moreover, we define a geometric counterpart of the model of flow of information on trees, due to Mossel and Peres, in which one considers a branching random walk on a sphere and the goal is to recover the location of the root based on the locations of leaves. We give some sufficient conditions for percolation and for non-percolation of information in this model.
\end{abstract}
\section{Introduction}
Community detection in large networks is a central task in data science. It is often the case that one gets to observe a large network, the links of which depends on some unknown, underlying community structure. A natural task in this case is to detect and recover this community structure to the best possible accuracy. \\
Perhaps the most well-studied model in this topic is the \emph{stochastic block model} (SBM) where a random graph whose vertex set is composed of several communities, $\{c_1,...,c_k\}$ is generated in a way that every pair of nodes $v,u$ which belong to communities $c(u), c(v)$, will be connected to each other with probability $p = p(c(v),c(u))$, hence with probability that only depends on the respective communities, and otherwise independently. The task is to recover the communities $c(\cdot)$ based on the graph. The (unknown) association of nodes with communities is usually assumed to be random and independent between different nodes. See \cite{Abbe18} for an extensive review of this model.\\
A natural extension of the stochastic block model is the geometric random graph, where the discrete set of communities is replaced by a metric space. More formally, given a metric space $(X,d)$, a function $f: V \to X$ from a vertex set $V$ to the metric space and a function $\vphi:\RR_+ \to [0,1]$, a graph is formed by connecting each pair of vertices $u,v$ independently, with probability 
$$
p(u,v) := \vphi \left ( d(f(u), f(v)) \right ).
$$
This model can sometimes mimic the behavior of real-world networks more accurately than the stochastic block model. For example, a user in a social network may be represented as a point in some linear space in a way that the coordinates correspond to attributes of her personality and her geographic location. The likelihood of two persons being associated with each other in the network will then depend on the proximity of several of these attributes. A flat community structure may therefore be two simplistic to reflect these underlying attributes. \\
Therefore, a natural extension of the theory of stochastic block models would be to understand under what conditions the geometric representation can be recovered by looking at the graph. Our focus is on the case that the metric is defined over a symmetric space, such as the Euclidean sphere in $d$-dimensions. By symmetry, we mean that the probability of two vertices to be connected, given their locations, is invariant under a natural group acting on the space. We are interested in the sparse regime where the expected degrees of the vertices do not converge to infinity with the size of the graph. This is the (arguably) natural and most challenging regime for the stochastic block model.
\subsection{Inference in geometric random graphs}

For the sake of simplicity, in what follows, we will assume that the metric space is the Euclidean sphere, and our main theorems will be formulated in this setting; It will be straightforward, yet technical, to generalize our results to more general symmetric space (see \cite{de2017adaptive} for further discussion on this point).\\
We begin by introducing a model for a random geometric graph. Let $\sigma$ be the uniform probability measure on $\Spd$ and let $\vphi: \Spd \times \Spd \to \RR_+$, be of the form $\vphi(x,y) = f(\inner{x}{y})$ for some $f: [-1,1] \to \RR_+$. Consider $\{X_i\}_{i=1}^n\sim \sigma$, a sequence of independently-sampled vectors. 
 Let $G\left(n, \frac{1}{n} \vphi(X_i, X_j)\right)$ be the inhomogeneous Erd\"{o}s-R\'{e}nyi graph model where edges are formed independently with probability $\frac{1}{n} \vphi(X_i, X_j)$  and let $A$ be the adjacency matrix of a random graph drawn from $G\left(n, \frac{1}{n}\vphi(X_i,X_j)\right)$. We may freely assume that $n$ is large enough so that, necessarily, $\frac{1}{n} \vphi(X_i, X_j) \leq 1$.\\
We now define the task of recovering the latent positions of the sample $\{X_i\}_{i=1}^n$ from an observation of the random geometric graph.
\begin{definition}
	We say that the model is $\eps-\emph{reconstructible}$ if, for all $n$ large enough, there is an algorithm which returns an $n \times n$ matrix $\mathcal{X}$ such that 
	$$
	\frac{1}{n^2} \sum_{i,j} |\mathcal{X}_{i,j} - X_i \cdot X_j|^2 \leq \eps.
	$$
\end{definition}

Remark that, due the symmetry of the model, it is clear that the locations can only be reconstructed up to an orthogonal transformation, which is equivalent to reconstruction of the Gram matrix.\\

In order state the conditions under which we prove the possibility of reconstructions, we need some notation. 
For $\varphi$ as above, define the integral operator $A_\vphi: L^2\left(\Spd\right)\to L^2\left(\Spd\right)$ by
$$A_\vphi(g)(x) = \int\limits_{\Spd} \vphi(x,y)g(y)d\sigma(y).$$
It is standard to show that $A_\vphi$ is a self-adjoint compact operator (see \cite{hirsch2012elements}, for example) and so has a discrete spectrum, except at $0$. By definition, $\vphi$ is invariant to rotations and so $A_\vphi$ commutes with the Laplacian. It follows that the eigenfunctions of $A_\vphi$ are precisely the \emph{spherical harmonics} which we denote by
$\{\psi_i\}_{i=0}^\infty$. Thus, if $\lambda_i(A_\vphi)$ denotes the eigenvalue of $\vphi$ corresponding to $\psi_i$ we have the following identity,
\begin{equation} \label{eq: l2 decomp}
\vphi = \sum\limits_{i=0}^\infty \lambda_i \psi_i \otimes \psi_i.
\end{equation}In particular, $\psi_0 = 1$ and for $i = 1,...,d$, $\psi_i$ are linear functionals such that, for $x,y \in \Spd$,
\begin{equation} \label{eq: inner product}
d \cdot \inner{x}{y} = \sum_{l=1}^d \psi_l(x)\psi_l(y).
\end{equation}

Note that in our notation the eigenvalues are indexed by the spherical harmonics, and are therefore not necessarily in decreasing order. By rotational invariance it must hold that
\begin{equation} \label{eq: lambda equality}
\lambda(\vphi) := \lambda_1=...=\lambda_d.
\end{equation}
Define $\|\varphi\|_\infty = \sup_{x,y} \varphi(x,y)$.
We make the following, arguably natural, assumptions on the function $\vphi$:

\begin{enumerate}[label={A\arabic*.}, ref=A\arabic*]
\item \label{assumption1}
There exist $\delta > 0$ such that $\min_{i\neq 1,\dots, d}|\lambda(\vphi) - \lambda_i|>\delta$.
\item \label{assumption 2}
Reordering the eigenvalues in decreasing order $\lambda_{l_0}\geq \lambda_{l_1}\geq\lambda_{l_2}\geq\dots$ there exists $C > 0$ such that for every $i\geq 0, |\lambda_{l_i}|\leq \frac{C}{(i+1)^2}$. 
\end{enumerate}
\begin{remark}
	For the spectral algorithm we propose, the eigenspace associated to $\lambda(\varphi)$ plays as a special role, since it may encode positions of latent vectors in $\RR^d$. Therefore, to extract meaningful information from the algorithm, it is crucial that this eigenspace is $d$-dimensional; a property that is enforced by Assumption \ref{assumption1}. However, it is not clear what would happen if, say, $\lambda_{d+1}$ is very close to $\lambda_d$. Our analysis. which is based on stability of the eigenspaces to perturbations of the kernel, fails in this case, but it is plausible the the algorithm (or another one) may still succeed. 
	
	Regarding Assumption \ref{assumption 2}. While, it is usually the case that, some integrability of eigenvalues is necessary for the problems we consider, we do not know whether the quadratic decay rate is tight. We have made no efforts to optimize this dependence and are willing to conjecture that there are possible improvements.
\end{remark}

\begin{theorem}
\label{thm: reconstruction bound}
	For every $\eps >0$ there exists a constant $C=C(\eps,d)$, such that the model is $\eps$-reconstructible whenever
	\begin{equation} \label{eq:thm1}
	\min_{i\neq 0,\dots,d}|\lambda_{i}-\lambda(\vphi)|^2 \geq C \| \vphi \|_{\infty}.
	\end{equation}
\end{theorem}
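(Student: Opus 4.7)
The plan is to regard the adjacency matrix $A$ as a noisy observation of its conditional mean $\hat M := \tfrac{1}{n}\bigl[\vphi(X_i,X_j)\bigr]_{i,j=1}^n$ and to extract the Gram matrix from the $d$-dimensional spectral subspace of $A$ associated to the eigenvalue $\lambda(\vphi)$ of the integral operator $A_\vphi$.

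First I would analyze the signal. Using the decomposition (\ref{eq: l2 decomp}) one can write
\begin{equation*}
\hat M = \frac{1}{n}\sum_{k\ge 0}\lambda_k\,\Psi_k\Psi_k^T,\qquad \Psi_k:=(\psi_k(X_1),\dots,\psi_k(X_n))^T.
\end{equation*}
Since $X_1,\dots,X_n$ are i.i.d.\ uniform and the $\{\psi_k\}$ are $L^2(\sigma)$-orthonormal, one has $\tfrac{1}{n}\Psi_k^T\Psi_l\approx\delta_{kl}$; assumption (\ref{assumption 2}) provides enough decay on $\lambda_k$ to keep this control uniform along the infinite tail, so that the spectrum of $\hat M$ is essentially a copy of that of $A_\vphi$. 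In particular there is a $d$-dimensional eigenspace $V_{\hat M}\subset\mathbb{R}^n$ of $\hat M$ corresponding to eigenvalues clustered near $\lambda(\vphi)$, and its orthogonal projector is, by (\ref{eq: inner product}), approximately
\begin{equation*}
(P_{\hat M})_{ij} \approx \frac{1}{n}\sum_{l=1}^d \psi_l(X_i)\psi_l(X_j) = \frac{d}{n}\inner{X_i}{X_j},
\end{equation*}
so $\tfrac{n}{d}P_{\hat M}$ already serves as a good approximation of the target Gram matrix.

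To transfer this information from $\hat M$ to $A$, I would bound the noise $E:=A-\hat M$ in operator norm. Because the graph is sparse (expected degrees bounded by $\|\vphi\|_\infty$), a direct matrix Bernstein bound is defeated by atypically high-degree vertices, so one has to preprocess $A$ by a trimming/regularization step (e.g.\ zeroing out rows and columns of vertices whose degree exceeds a fixed multiple of $\|\vphi\|_\infty$). After such a step, standard concentration results for sparse inhomogeneous Erd\H{o}s--R\'enyi graphs (Feige--Ofek, Le--Levina--Vershynin, etc.) yield $\|E\|_{\mathrm{op}}\le C_0\sqrt{\|\vphi\|_\infty}$ with high probability for an absolute constant $C_0$. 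Assumption (\ref{eq:thm1}) then ensures that the spectral gap separating $\lambda(\vphi)$ from the remaining spectrum of $\hat M$ exceeds $\|E\|_{\mathrm{op}}$ by a factor of order $\sqrt{C}/C_0$, so the Davis--Kahan theorem implies that the $d$-dimensional spectral subspace $\widehat V$ of $A$ whose eigenvalues are closest to $\lambda(\vphi)$ is $O(\sqrt{d/C})$-close in Frobenius norm to $V_{\hat M}$. Denoting its projector $\widehat P$, the estimator $\tfrac{n}{d}\widehat P$ satisfies $\tfrac{1}{n^2}\sum_{i,j}\bigl(\tfrac{n}{d}\widehat P_{ij}-\inner{X_i}{X_j}\bigr)^2\le \varepsilon$ provided $C=C(\varepsilon,d)$ is taken large enough.

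The main obstacle is the sparse-regime spectral concentration $\|A-\hat M\|_{\mathrm{op}}=O(\sqrt{\|\vphi\|_\infty})$: without a graph-regularization/trimming step this bound degrades to $O(\sqrt{\log n/\log\log n})$, which is incompatible with the dimension-free threshold in (\ref{eq:thm1}). A secondary but non-trivial point is establishing the quantitative law of large numbers $\tfrac{1}{n}\Psi_k^T\Psi_l\approx\delta_{kl}$ \emph{uniformly} in $k,l$, since the infinite tail of spherical-harmonics eigenfunctions contributes to $\hat M$; this is precisely the role of the polynomial decay in assumption (\ref{assumption 2}), which also underlies the identification of a well-separated signal cluster of $d$ eigenvalues via assumption (\ref{assumption1}).
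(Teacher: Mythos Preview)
Your proposal is correct and follows essentially the same route as the paper: identify the $d$-dimensional eigenspace of the kernel matrix $\hat M$ near $\lambda(\vphi)$, invoke the Le--Levina--Vershynin regularization bound to get $\|A'-\hat M\|_{\mathrm{op}}=O(\sqrt{\|\vphi\|_\infty})$, and then apply Davis--Kahan. The paper makes your ``secondary non-trivial point'' precise via a finite-rank truncation $M_n^r$ together with matrix Bernstein on $(\Psi_n^r)^T\Psi_n^r-\mathrm{Id}_r$ and assumption~\ref{assumption 2} for the tail in Frobenius norm, which is exactly where your sketch would need to be fleshed out.
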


\begin{remark}
Observe that, since the left hand side of condition \eqref{eq:thm1} is $2$-homogeneous, whereas its right hand side is $1$-homogeneous, we have that as long as the left hand side is nonzero, by multiplication of the function $\vphi$ by a large enough constant,  the condition can be made to hold true. 
\end{remark}
Theorem \ref{thm: reconstruction bound} can be compared to the known bounds for recovery in the stochastic block model (see \cite{Abbe18}). In particular, it is illuminating to compare the SBM with two communities to linear kernels in our model. In this case, both models are parameterized by two numbers. In the SBM these are the inter- and intra- communities probabilities and in our model, the coefficients of the linear function. In the SBM, denote the parameters as $a$ and $b$, then recovery of the communities depends on the ratio $\frac{(a-b)^2}{a+b}$. The example below gives a similar result for linear kernels, with a dimensional affect, which typically makes reconstruction easier.
\begin{example}
	Consider the linear kernel,
	$\varphi(x,y) =\gamma + \zeta\langle x,y\rangle$, with $|\zeta|\leq \gamma$. A calculation shows that
	\begin{align*}
	\lambda_0 &= \gamma \\
	\lambda(\vphi) &= \frac{\zeta}{d}.
	\end{align*}
	Applying our theorem, we show that the model is reconstructible whenever
	\[
	\left| \gamma-\frac{\zeta}{d}\right|^2 \geq C \cdot (\gamma+\zeta).
	\]
\end{example}
\paragraph{Methods and related works.} Our reconstruction theorem is based on a spectral method, via the following steps:
\begin{enumerate}
\item 
We observe that by symmetry of our kernel, linear functions are among its eigenfunctions. We show that the kernel matrix (hence the matrix obtained by evaluating the kernel at pairs of the points $(X_i)$) will have respective eigenvalues and eigenvectors which approximate the ones of the continuous kernel. 
\item 
Observing that the kernel matrix is the expectation of the adjacency matrix, we rely on a matrix concentration inequality due to Le-Levina-Vershynin \cite{LeLevVer18} to show that the eigenvalues of the former are close to the ones of the latter.
\item 
We use the Davis-Kahan theorem to show that the corresponding eigenvectors are also close to each other.
\end{enumerate}
The idea in Steps 2 and 3 is not new, and rather standard (see \cite{LeLevVer18} and references therein). Thus, the main technical contribution in proving our upper bound is in Step 1, where we prove a bound for the convergence of eigenvectors of kernel matrices. So far, similar results have only been obtained in the special case that the Kernel is positive-definite, see for instance \cite{braun2006accurate}.\\
The paper \cite{valdivia2018relative} considers kernels satisfying some Sobolev-type hypotheses similar to our assumptions on $\varphi$ (but gives results on the spectrum rather than the eigenvectors). Reconstruction of the eigenspaces has been considered in \cite{SusTanPrie13}, for positive definite kernels in the dense regime; in \cite{SusTanPrie14}, for random dot products graphs; and in \cite{Valdivia_DeCastro19} in the dense and relatively sparse regimes, again for kernels satisfying some Sobolev-type hypotheses. \\
Let us also mention other works which augmented the SBM with geometric information. The paper \cite{deshpande2018contextual} considers the problem of discrete community recovery when presented with informative node covariates, inspired by the spiked  covariance  model. The authors derived a sharp threshold for recovery by introducing a spectral-like algorithm. However, the model is rather different than the one we propose in which the community structure is continuous.\\
A model which is slightly closer to the one we consider appears in \cite{galhotra2017geometric}. In this model, communities are still discrete but the edge connectivity depends continuously on the latent positions of nodes on some $d$ dimensional sphere. In such a model, recovery of the communities may be reduced to more combinatorial arguments. Indeed, the number of common neighbors can serve as an indicator for checking whether two nodes come from the same community. A similar idea was previously explored in \cite{bubeck2016testing}, where a triangle count was used to establish a threshold for detecting latent geometry in random geometric graphs.\\
\subsection{Percolation of geometric information in trees} \label{sec:perco}
The above theorem gives an upper bound for the threshold for reconstruction. The question of finding respective lower bounds, in the stochastic block model, is usually reduced to a related but somewhat simpler model of percolation of information on \emph{trees}. The idea is that in the sparse regime, the neighborhood of each node in the graph is usually a tree, and it can be shown that recovering the community of a specific node based on observation of the entire graph, is more difficult than the recovery of its location based on knowledge of the community association of the leaves of a tree rooted at this node. For a formal derivation of this reduction (in the case of the SBM), we refer to \cite{Abbe18}.\\
This gives rise to the following model, first described in Mossel and Peres \cite{MosPer03} (see also \cite{Mos04}): Consider a $q$-ary tree $T=(V,E)$ of depth $k$, rooted at $r \in V$. Suppose that each node in $V$ is associated with a label $\ell: V \to \{1,..,k\}$ in the following way: The root $r$ is assigned with some label and then, iteratively, each node is assigned with its direct ancestor's label with probability $p$ and with a uniformly picked label with probability $1-p$ (independent between the nodes at each level). The goal is then to detect the assignment of the root based on observation of the leaves.\\
Let us now suggest an extension of this model to the geometric setting. We fix a Markov kernel $\vphi(x,y) = f(\langle x,y \rangle)$, normalized such that $\int_{\Sph} \varphi(x,y) d \sigma(y) = 1$ for all $x \in \Spd$. We define $\ff: T \to \Spd$ in the following way. For the root $r$, $\ff(r)$ is picked according to the uniform measure. Iteratively, given that $\ff(v)$ is already set for all nodes $v$ at the $\ell$-th level, we pick the values $\ff(u)$ for nodes $u$ at the $(\ell+1)$th level independently, so that if $u$ is a direct descendant of $v$, the label $\ff(u)$ is distributed according to the law $\varphi(\ff(v), \cdot) d \sigma$. 

Denote by $T_k \subset V$ the set of nodes at depth $k$, and define by $\mu_k$ the conditional distribution of $\ff(r)$ given $(\ff(v))_{v \in T_k}$. We say that the model has positive information flow if 
$$
\lim_{k \to \infty} \EE \left [ \mathrm{TV} (\mu_k, \sigma) \right ] > 0.
$$
Remark that by symmetry, we have
$$
\EE \left [ \mathrm{TV} (\mu_k, \sigma) \right ] = \EE \left [ \mathrm{TV} (\mu_k, \sigma) | \ff(r) = e_1 \right ]
$$
where $r$ is the root and $e_1$ is the north pole. 

Our second objective in this work is to make the first steps towards understanding under which conditions the model has positive information flow, and in particular, our focus is on providing nontrivial sufficient conditions on $q, \varphi$ for the above limit to be equal to zero. 

Let us first outline a natural sufficient condition for the information flow to be positive which, as we later show, turns out to be sharp in the case of Gaussian kernels. Consider the following simple observable,
$$
Z_k := \frac{1}{|T_k|} \sum_{v \in T_k} \ff(v).
$$
By Bayes' rule, we clearly have that the model has positive information flow if (but not only if)
\begin{equation}\label{eq:flowdecay}
\liminf_{k \to \infty} \frac{ \EE[ \langle Z_k, e_1 \rangle | \ff(r)=e_1] }{\sqrt{\mathrm{Var} \left [\langle Z_k, e_1 \rangle \right | \ff(r) = e_1 ]} } > 0. 
\end{equation}
This gives rise to the parameter
$$
\lambda(\vphi) := \int_{\Spd} \langle x , e_1  \rangle  \varphi(e_1, x) d\sigma(x),
$$
which is the eigenvalue corresponding to linear harmonics. As $\langle x , e_1  \rangle$ is an eigenfunction of the Markov kernel, it is preserved under repeated applications. Coupling this observation with the linearity of expectation, we have
\begin{equation} \label{eq:markovspectrum}
\EE[ \langle Z_k, e_1 \rangle | \ff(r)=e_1] = \lambda(\vphi)^k.
\end{equation}
Moreover, by symmetry if $j \neq 1$,
$$
0 = \int_{\Spd} \langle x , e_j  \rangle  \varphi(e_1, x) d\sigma(x),
$$
from which we may conclude for any $v \in \Spd$,
\begin{equation} \label{eq:markovprojection}
\lambda(\vphi)\langle v, e_1  \rangle = \int_{\Spd} \langle x , v  \rangle  \varphi(e_1, x) d\sigma(x),
\end{equation}
For two nodes $u,v \in T$ define by $c(u,v)$ the deepest common ancestor of $u,v$ and by $\ell(u,v)$ its level. A calculation gives
\begin{align*}
\mathrm{Var} \left [\langle Z_k, e_1 \rangle \right | \ff(r) = e_1 ] & = \frac{1}{q^{2k}} \sum_{u,v \in T_k} \EE \left [ \ff(v)_1 \ff(u)_1  | \ff(r) = e_1 \right ] - \lambda(\vphi)^{2k} \\
& = \frac{1}{q^{2k}} \sum_{u,v \in T_k} \EE \left [ \EE [\ff(v)_1 | \ff(c(u,v))] \EE[\ff(u)_1 | \ff(c(u,v))]  | \ff(r) = e_1 \right ] - \lambda(\vphi)^{2k} \\
& = \frac{1}{q^{2k}} \sum_{u,v \in T_k} \EE \left [\ff(c(u,v))_1^2 | \ff(r) = e_1 \right ] \lambda(\vphi)^{2(k-\ell(u,v))}  - \lambda(\vphi)^{2k} \\
& \leq \frac{1}{q^{2k}} \sum_{u,v \in T_k} \lambda(\vphi)^{2(k-\ell(u,v))}  - \lambda(\vphi)^{2k} \\
& = \frac{\lambda(\vphi)^{2k}}{q^{2k}} \sum_{u,v \in T_k} \lambda(\vphi)^{-2\ell(u,v)}  - \lambda(\vphi)^{2k} \\
& \leq \frac{\lambda(\vphi)^{2k}}{q^{2k}} \sum_{\ell=0}^k q^\ell q^{2(k-\ell)} \lambda(\vphi)^{-2\ell} - \lambda(\vphi)^{2k} = \lambda(\vphi)^{2k} \sum_{\ell=1}^k \left (q \lambda(\vphi)^2 \right )^{-\ell},
\end{align*}
where in the third equality, we have applied \eqref{eq:markovspectrum} and \eqref{eq:markovprojection} to $\EE [\ff(v)_1 | \ff(c(u,v))]$ and  $\EE[\ff(u)_1 | \ff(c(u,v))]$.
This gives a sufficient condition for \eqref{eq:flowdecay} to hold true, concluding:
\begin{claim}
The condition $q \lambda(\vphi)^2 > 1$ is sufficient for the model to have positive percolation of information.
\end{claim}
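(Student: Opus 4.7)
The claim is essentially a corollary of the two computations immediately preceding it, together with the Bayes'-rule reduction \eqref{eq:flowdecay}. The plan has two short steps: first, simplify the already-derived variance bound using the hypothesis; second, read off \eqref{eq:flowdecay} and invoke it to conclude.

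Set $\alpha := q\lambda(\vphi)^2$ and assume $\alpha > 1$. Then the finite geometric sum appearing in the last line of the displayed variance computation admits the uniform bound
$$
\sum_{\ell=1}^k \alpha^{-\ell} \leq \sum_{\ell=1}^\infty \alpha^{-\ell} = \frac{1}{\alpha - 1},
$$
independently of $k$. Substituting this into the variance inequality derived just above the claim gives
$$
\mathrm{Var}\!\left[\langle Z_k, e_1 \rangle \mid \ff(r) = e_1\right] \leq \frac{\lambda(\vphi)^{2k}}{\alpha - 1}.
$$
Combined with the first-moment identity $\EE[\langle Z_k, e_1\rangle \mid \ff(r) = e_1] = \lambda(\vphi)^k$, this yields
$$
\frac{\EE[\langle Z_k, e_1\rangle \mid \ff(r) = e_1]}{\sqrt{\mathrm{Var}[\langle Z_k, e_1\rangle \mid \ff(r) = e_1]}} \geq \sqrt{\alpha - 1} > 0,
$$
so the signal-to-noise lower bound \eqref{eq:flowdecay} holds.

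By the Bayes'-rule remark made just before \eqref{eq:flowdecay}, this already implies positive information flow. Concretely: the estimator $\hat r := Z_k / \lambda(\vphi)^k$ satisfies $\EE[\hat r \mid \ff(r) = e_1] = e_1$ while its conditional variance stays bounded by $1/(\alpha-1)$, so by a Chebyshev / two-point testing argument the leaf configuration distinguishes $\ff(r) = e_1$ from the marginal $\sigma$ with power bounded away from $0$, forcing $\liminf_{k \to \infty} \EE[\mathrm{TV}(\mu_k, \sigma)] > 0$.

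The only step that is not pure algebra is this final Bayes' implication; since the paper treats it as standard, there is essentially no obstacle, and the real content of the proof is the geometric-sum observation above.
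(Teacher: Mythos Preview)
Your proposal is correct and follows essentially the same approach as the paper: the paper also derives the expectation $\lambda(\vphi)^k$ and the variance bound $\lambda(\vphi)^{2k}\sum_{\ell=1}^k (q\lambda(\vphi)^2)^{-\ell}$, and then simply notes that this makes the ratio in \eqref{eq:flowdecay} bounded away from zero when $q\lambda(\vphi)^2>1$. Your write-up just makes the geometric-sum bound and the resulting lower bound $\sqrt{\alpha-1}$ explicit, and spells out the Bayes'/Chebyshev heuristic that the paper leaves implicit.
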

\noindent We will refer to this as the Kesten-Stigum (KS) bound. \\

We now turn to describe our lower bounds. For the Gaussian kernel, we give a lower bound which misses by a factor of $2$ from giving a matching bound to the KS bound. To describe the Gaussian kernel, fix $\beta > 0$, let $X$ be a normal random variable with law $\mathcal{N}(0,\beta)$ and suppose that $\vphi : \mathbb{S}^1 \times \mathbb{S}^1 \to \RR $ is such that 
\begin{equation} \label{eq:gausskernel}
\varphi(x, \cdot)\text{ is the density of }(x + X) \mathrm{mod} 2 \pi, 
\end{equation} where we identify $\mathbb{S}^1$ with the interval $[0,2 \pi)$. We have the following result.

\begin{theorem} \label{thm:Gaussian2d}
For the Gaussian kernel defined above, there is zero information flow whenever $q \lambda(\vphi) < 1$. 
\end{theorem}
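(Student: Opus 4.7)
My plan is to expand the posterior density at the root in Fourier series on $\mathbb{S}^1$ and show that each Fourier coefficient decays in expectation like $(q\lambda)^k$, via a recursion along the tree.

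For a vertex $v$, let $L_v(\theta) := \PP(\text{leaves of } T_v \mid \ff(v) = \theta)$ denote the associated likelihood, let $\hat L_v(m) := \int e^{-im\theta} L_v(\theta)\, d\sigma(\theta)$ be its Fourier coefficients, and note that the posterior density at $v$ is $G_v = L_v/\hat L_v(0)$. By the triangle inequality in the Fourier expansion of $G_r - 1$,
\[
\|\mu_k - \sigma\|_{TV} \;=\; \tfrac12 \|G_r - 1\|_{L^1(\sigma)} \;\leq\; \tfrac12 \sum_{m \neq 0} \bigl|\hat G_r(m)\bigr|,
\]
so it suffices to show $\sum_{m \neq 0} \EE|\hat G_r(m)| \to 0$.

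Conditional independence of the $q$ subtrees at $v$ gives $L_v = \prod_{i=1}^q (K L_{u_i})$, where $(Kf)(\theta) := \int \vphi(\theta, \eta) f(\eta)\, d\sigma(\eta)$ acts on Fourier modes by $K e^{-im\theta} = \lambda_m e^{-im\theta}$ with $\lambda_m = \lambda^{m^2}$ for the Gaussian kernel. This translates into the convolutional recursion
\[
\hat L_v(m) \;=\; \sum_{m_1 + \cdots + m_q = m} \prod_{i=1}^q \lambda_{m_i}\, \hat L_{u_i}(m_i).
\]
Let $E^{(m)}_D := \EE\bigl[\,|\hat L_v(m)|\;\bigm|\;\ff(v) = 0\,\bigr]$ for a vertex $v$ whose subtree has depth $D$; since $|\hat L_v(m)|$ is rotation invariant, this depends only on $D$. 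The triangle inequality on the recursion, together with independence of the subtrees and the observation that $\EE[\,|\hat L_{u_i}(m_i)|\mid \ff(u_i) = \eta\,]$ is independent of $\eta$, yields
\[
E_D^{(m)} \;\leq\; \sum_{m_1 + \cdots + m_q = m} \prod_{i=1}^q \lambda_{m_i}\, E_{D-1}^{(m_i)}, \qquad E_0^{(m)} = 1.
\]
The dominant term has exactly one $m_i$ equal to $m$ and all others $0$, contributing $q\lambda_m E_{D-1}^{(m)} (E_{D-1}^{(0)})^{q-1}$, whereas subleading terms carry extra factors $\lambda_{m'}$ with $m' \neq 0, m$. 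A bootstrap shows that $E_D^{(0)}$ stays uniformly bounded and $E_D^{(m)} \lesssim (q\lambda_m)^D$ for $m \neq 0$ under $q\lambda < 1$.

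To pass from $\hat L_r$ to $\hat G_r = \hat L_r/\hat L_r(0)$, I would use an auxiliary concentration estimate: the same recursive framework applied to the second moment of $\hat L_r(0)$ shows that $\hat L_r(0)$ stays within a constant factor of its mean with high probability, giving $\EE|\hat G_r(m)| \lesssim (q\lambda_m)^k$. Summing over $m$ and using the super-exponential decay $\lambda_m = \lambda^{m^2}$,
\[
\sum_{m \neq 0} \EE|\hat G_r(m)| \;\lesssim\; \sum_{m \geq 1} (q\lambda^{m^2})^k \;\lesssim\; (q\lambda)^k,
\]
which tends to zero whenever $q\lambda < 1$.

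The main obstacle is the nonlinear normalization by $\hat L_r(0)$: while the linear recursion for $\hat L_r(m)$ is clean, the ratio $\hat G_r(m)$ requires controlling numerator and denominator simultaneously. The Gaussian-specific identity $\lambda_m = \lambda^{m^2}$ is essential for taming higher harmonics in both the first-moment bound and the concentration estimate, and it is also what pins down the factor-of-two gap from the Kesten-Stigum threshold, since the critical contribution comes from the $|m|=1$ mode with weight $q\lambda$ rather than $q\lambda^2$.
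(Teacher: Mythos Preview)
Your approach is genuinely different from the paper's. The paper never touches Fourier coefficients or likelihood recursions; it uses a coupling specific to Brownian motion. Along the path to each leaf $v$ the concatenated Brownian motion $B_v$ on $[0,k]$ has image covering $[0,\pi)$ (mod $\pi$) with probability at least $1-Ce^{-k\beta/2}=1-C\lambda(\vphi)^k$. On that event the strong Markov property at the first hitting time of $\{\theta,\theta-\pi\}$ makes the conditional law of $g(v)$ symmetric about $\theta$, so the joint leaf law lies in $\mathrm{DPS}_\theta^k$. A union bound over the $q^k$ leaves together with the general criterion Lemma~\ref{lem:generallower} then gives non\nobreakdash-reconstruction once $q\lambda<1$.

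Your Fourier route has a real gap at exactly the point you yourself flag. The triangle-inequality recursion for the unnormalized moments cannot keep $E_D^{(0)}$ bounded: at $m=0$ the dominant term on the right-hand side is $\bigl(E_{D-1}^{(0)}\bigr)^q$, and one computes directly that $E_1^{(0)}=\sum_{m_1+\cdots+m_q=0}\prod_i\lambda_{m_i}^{2}>1$, so any inductive hypothesis $E_{D-1}^{(0)}\le C$ would force $C^q\le C$, impossible for $C>1$. In fact $E_D^{(0)}=\int p_D(Y)^2\,dY$ is the squared $L^2$-norm of the marginal leaf density and genuinely grows with $D$ when $q\ge 2$. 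Hence the inequality gives no control on $E_D^{(m)}/E_D^{(0)}$, and the separate concentration of $\hat L_r(0)$ that you invoke is neither proved nor to be expected: a second-moment recursion for $\hat L_r(0)$ suffers the same $(\cdot)^q$ blow-up. To make a Fourier argument work one must track the \emph{normalized} posterior through the nonlinear recursion $G_v=\prod_i(KG_{u_i})\big/\!\int\prod_i(KG_{u_i})$ and control the error in the linearization $\hat G_v(m)\approx\sum_i\lambda_m\hat G_{u_i}(m)$ uniformly along the tree; this is substantially more delicate than what you sketch and is precisely what the paper's coupling sidesteps.
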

\paragraph{Related results in the Gaussian setting.} In the discrete setting, an analogous result was obtained in \cite{mossel2001reconstruction}. In fact, our method of proof is closely related. We use the inherent symmetries of the spherical kernels to decouple the values of $g(r)$ and $g(v)$, where $v$ is some vertex which is sufficiently distant from $r$. This is same idea of Proposition 10 in\cite{mossel2001reconstruction} which uses a decomposition of the transition matrix to deduce a similar conclusion.\\
Another related result appears in \cite{mossel2013robust}. In the paper the authors consider a broadcast model on a binary tree with a Gaussian Markov random field and obtain a result in the same spirit as the one above. However, since the random field they consider is real valued, as opposed to our process which is constrained to the circle, the method of proof is quite different.\\

In the general case, we were unable to give a corresponding bound, nevertheless, using the same ideas, we are able to give some nontrivial sufficient condition for zero flow of information for some $q>1$, formulated in terms of the eigenvalues of the kernel. To our knowledge, this is the first known result in this direction. In order to formulate our result, we need some definitions.

We begin with a slightly generalized notion of a $q$-ary tree.
\begin{definition}
	Let $q > 1$, we say that $T$ is a tree of growth at most $q$ if for every $k \in \NN$, 
	$$\left|T_k\right| \leq \left\lceil q^k \right\rceil.$$  
\end{definition}
Now, recall that $\vphi(x,y) = f(\inner{x}{y})$. 
Our bound is proven under the following assumptions on the kernel.
\begin{itemize} \label{ass:kernel}
	\item $f$ is monotone.
	\item $f$ is continuous.
	\item $\lambda(\vphi) > 0$ and for every $i \geq 1$, $ |\lambda_i| \leq \lambda(\vphi)$.
\end{itemize}

\noindent We obtain the following result.
\begin{theorem} \label{thm:lowerbound}
	Let $\vphi$ satisfy the assumptions above and let $T$ be a tree of growth at most $q$. There exists a universal constant $c > 0$, such that if 
	$$
	q \leq \left(1-c\frac{\ln(\lambda(\vphi))(1-\lambda(\varphi))^2}{\ln\left(\frac{\lambda(\vphi)(1-\lambda(\vphi))}{f(1)}\right)}\right)^{-1}
	$$
	then the model has zero percolation of information.
\end{theorem}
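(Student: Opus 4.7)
The plan is to prove zero percolation by a coupling argument that recursively controls the total variation distance between the leaf-distributions conditioned on different root labels, with the logarithm in the final bound arising from the optimization of a truncation scale.

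First, I would use the standard reformulation of non-reconstruction: let $P^x_k$ denote the joint law of $(\ff(v))_{v \in T_k}$ conditioned on $\ff(r) = x$. Then $\EE[\mathrm{TV}(\mu_k, \sigma)] \to 0$ is equivalent, after averaging over the prior, to showing $\sup_{x,y \in \Spd} \mathrm{TV}(P^x_k, P^y_k) \to 0$. It thus suffices to exhibit, for any pair $x, y$, a coupling of the two leaf configurations that succeeds with probability tending to $1$ as $k \to \infty$.

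Second, I would analyze the per-edge total variation $\tau(a,b) := \mathrm{TV}(\vphi(a,\cdot)d\sigma, \vphi(b,\cdot)d\sigma)$. By rotational invariance and monotonicity of $f$, $\tau$ depends only on $s = \langle a,b\rangle$, so write $\tau(a,b) = \Psi(s)$ with $\Psi(1) = 0$ and $\Psi$ monotone. Using the harmonic expansion $\vphi(x,y) = 1 + \sum_{i \geq 1} \lambda_i \psi_i(x)\psi_i(y)$ together with the spectral assumption $|\lambda_i| \leq \lambda(\vphi)$, one expects a smoothness bound of the form $\Psi(s) \lesssim \lambda(\vphi) \cdot \omega(1-s)$ for $s$ close to $1$, valid modulo an $L^\infty$ tail controlled by $f(1)$; in the complementary regime one uses the trivial bound $\Psi \leq 1$.

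Third, I would set up the recursive coupling. Starting from candidate roots $x,y$, I apply the maximal coupling of $\vphi(x,\cdot)d\sigma$ and $\vphi(y,\cdot)d\sigma$ at each of the (at most) $q$ children: with probability $1-\Psi(\langle x,y\rangle)$ the child labels agree and the entire subtree below couples exactly, while with probability $\Psi(\langle x,y\rangle)$ they disagree and the pair of new labels is drawn from the failure measure of the maximal coupling. Iterating, the set of uncoupled vertices forms a branching process whose offspring distribution has mean
$$m := q \cdot \EE\bigl[\Psi(\langle \ff(v_1),\ff(v_2)\rangle)\bigr],$$
under the failure law. If $m < 1$, the uncoupled process dies out almost surely, yielding $P^x_k = P^y_k$ in the limit and hence zero information flow.

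The main obstacle, and the source of the logarithmic factor in the final bound, is to turn Step 2 into a usable quantitative bound on $m$. I would introduce a threshold $t \in (0,1)$ and split the analysis into a ``close'' regime $\langle a,b\rangle \geq t$ (where $\Psi$ is small, controlled by $\lambda(\vphi)$ and the $L^2$ smoothness of $\vphi$) and a ``far'' regime where $\Psi$ is bounded by $1$ but the failure measure puts only small mass on such pairs (at cost $f(1)$). Choosing $t$ to balance these two contributions produces a logarithmic ratio of the form $\log(\lambda(\vphi)(1-\lambda(\vphi))/f(1))$ in the denominator of the bound on $q$, while the factor $(1-\lambda(\vphi))^2$ accounts for the slack between the per-edge coupling success probability and $1$. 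Verifying that these contributions combine to give $m < 1$ for some $q$ strictly larger than $1$ under the hypothesized inequality is the essential technical step.
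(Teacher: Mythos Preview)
Your approach is genuinely different from the paper's, and it has a real gap. The paper does \emph{not} use a recursive maximal coupling; instead it introduces a reflection-symmetry property (``DPS''): for any hyperplane direction $w$, the law of the leaves conditioned on the root can be written as a mixture $(1-p_k)\mu_w + p_k \mu_w^s$ with $\mu_w^s$ invariant under reflection in $w^\perp$ and $p_k \to 1$, which suffices for non-reconstruction (their Lemma~\ref{lem:generallower}). The logarithmic ratio in the bound is the \emph{mixing time} $k_0$ of the single-path random walk: one needs about $k_0 = \ln\bigl(\lambda(\vphi)(1-\lambda(\vphi))/f(1)\bigr)/\ln(\lambda(\vphi))$ steps before the walk enters a thin strip around $w^\perp$ with probability $\gtrsim 1-\lambda(\vphi)$ (Lemma~\ref{lem:mixing}), and once there, monotonicity of $f$ gives a reflection-symmetric component of mass $\gtrsim 1-\lambda(\vphi)$ (Lemma~\ref{lem:eigendps}). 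Combining these yields a per-$k_0$-block success probability of order $(1-\lambda(\vphi))^2$, whence the factor $(1-\lambda(\vphi))^2/k_0$ in the bound.

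The gap in your argument is that the ``branching process of uncoupled vertices'' is not a Galton--Watson process: its offspring law depends on the current state $s = \langle \ff(v_1),\ff(v_2)\rangle$, and there is no single failure law under which to compute $m$. After a failed maximal coupling the pair $(\ff(v_1),\ff(v_2))$ is drawn from the \emph{residual} part of the coupling, which by construction is supported where the two densities differ most, so typically $s$ is far from $1$ and $\Psi(s)$ is close to $1$; the next step then fails with probability near $1$, giving effective offspring mean near $q$. Your claim that ``the failure measure puts only small mass on far pairs at cost $f(1)$'' is exactly the missing ingredient, and nothing in your outline supplies it. A workable version of your strategy would have to group edges into blocks of length $k_0$ and couple only after the two walks have separately mixed; but then the logarithm arises from the mixing time, not from a threshold optimization on $\Psi$, and you are essentially rederiving the paper's mechanism in a different language.
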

\paragraph{Possible connection to reconstruction on random geometric graphs.}
	One might naively expect that the machinery developed in \cite{MosPer03} might be used, in conjunction with Theorems \ref{thm:Gaussian2d} and \ref{thm:lowerbound}, to deduce lower bounds for reconstructability in random geometric graphs. However, in contrast to the SBM, if $v$ is a vertex and we condition on the latent positions of all vertices in the boundary of some ball around $v$, the position of $v$ still retains some dependency with the position of vertices outside the ball. 
	
	Because of this fact, reducing a lower bound for the problem of recovering latent positions in random geometric graphs to the continuous broadcast model cannot be as straightforward as in the SBM. We leave the existence of a possible reduction as an interesting question for future research.
\section{The upper bound: Proof of Theorem \ref{thm: reconstruction bound}}

Recall the following identity in $L^2(\Spd)$,
\begin{displaymath} 
\vphi = \sum\limits_{k=0}^\infty \lambda_k \psi_k \otimes \psi_k,
\end{displaymath}
with the eigenvalues $\lambda_k$ indexed by the spherical harmonics.  Define the random matrices $M_n, \Psi_n$ by
$$(M_n)_{i,j} = \frac{1}{n}\vphi(X_i,X_j),  ~~~~~ (\Psi_n)_{i,k} = \frac{1}{\sqrt{n}}\psi_k(X_i).$$
Note that $M_n$ is an $n \times n$ matrix, while $\Psi_n$, has infinitely many columns. Furthermore, denote by $\Lambda$ the diagonal matrix $\mathrm{diag}\{\lambda_i\}_{i=0}^\infty$. Then, $\sigma$-almost surely, for every $i,j \in [n],$
\[
(M_n)_{i,j} = \frac{1}{n}\sum _{k=0}^\infty \lambda_k\psi_k(X_i)\psi_k(X_j) =(\Psi_n\Lambda\Psi_n^T)_{i,j}.
\]
For $r \in \NN$ we also denote
$$\vphi^r := \sum\limits_{k=0}^r \lambda_k \psi_k \otimes \psi_k,$$
the finite rank approximation of $\vphi$, $\Lambda^r = \mathrm{diag}\{\lambda_k\}_{k=0}^r$, and $\Psi_n^r$ the sub-matrix of $\Psi_n$ composed of its first $r$ columns. Finally, denote
$$M_n^r =\Psi_n^r\Lambda^r(\Psi_n^r)^T.$$
As before, let $A$ be an adjacency matrix drawn from $G\left(n, \frac{1}{n}\vphi(X_i,X_j)\right)$ so that $\EE \left[A|X_1,\dots,X_n\right] = M_n$. Our goal is to recover $\Psi_n^{d+1}$ from the observed $A$. 
The first step is to recover $\Psi_n^{d+1}$ from $M_n$. We begin by showing that the columns of $\Psi_n^r$ are, up to a small additive error, eigenvectors of $M_n^r$.
To this end, denote
$$E_{n,r} := (\Psi_n^r)^T\Psi_n^r - \mathrm{Id}_r,$$
$C(n,r) = \norm{E_{n,r}}^2_{op}$, and $K = \max_i{\lambda_i}$. 
\begin{lemma} \label{lem: lambda approx}
	Let $u_i$ be the $i$'th column of $\Psi_n$ and let $\eta >0$. Then
	\[
	\norm{M_n^ru_i - \lambda_iu_i}_2^2\leq K^2(\sqrt{C(n,r)}+1)C(n,r).
	\]
	Moreover, whenever $n\geq l(r) \log (2r/\eta)$, we have with probability larger than $1-\eta$,
	\[
	C(n,r) \leq \frac{4 l(r)\log(2r/\eta)}{n}.
	\]
	where $l(r)$ only depends on $r$ and on the dimension.
\end{lemma}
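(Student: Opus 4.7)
The plan is to prove the lemma in two independent stages. First, I will establish the deterministic bound $\norm{M_n^r u_i - \lambda_i u_i}_2^2 \le K^2(\sqrt{C(n,r)}+1)C(n,r)$ through a direct computation that reduces the residual to the operator norm of $E_{n,r}$. Second, I will bound $C(n,r)$ with high probability by recognizing $E_{n,r}$ as an empirical average of centered matrices and applying a matrix concentration inequality.

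For the first stage I work with $i \le r$, so that $u_i$ is the $i$-th column of $\Psi_n^r$ itself; equivalently, $u_i = \Psi_n^r e_i$ where $e_i$ is the $i$-th standard basis vector. Then
\[
(\Psi_n^r)^T u_i \;=\; (\Psi_n^r)^T \Psi_n^r\, e_i \;=\; (\mathrm{Id}_r + E_{n,r})\, e_i,
\]
so that
\[
M_n^r u_i - \lambda_i u_i \;=\; \Psi_n^r \Lambda^r (\mathrm{Id}_r + E_{n,r}) e_i - \lambda_i \Psi_n^r e_i \;=\; \Psi_n^r \Lambda^r E_{n,r} e_i.
\]
To bound the norm of the right-hand side I note that for any vector $w$,
\[
\norm{\Psi_n^r w}_2^2 \;=\; \inner{w}{(\mathrm{Id}_r + E_{n,r}) w} \;\le\; (1+\norm{E_{n,r}}_{op})\norm{w}_2^2.
\]
Applying this with $w = \Lambda^r E_{n,r} e_i$, and using $\norm{\Lambda^r}_{op} \le K$ together with $\norm{E_{n,r} e_i}_2 \le \norm{E_{n,r}}_{op}$, gives the claimed inequality directly.

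For the second stage I write $E_{n,r} = \frac{1}{n}\sum_{j=1}^n Z_j$ with
\[
Z_j \;:=\; \Phi(X_j)\Phi(X_j)^T - \mathrm{Id}_r, \qquad \Phi(x) \;:=\; (\psi_0(x),\ldots,\psi_r(x))^T.
\]
By $L^2(\sigma)$-orthonormality of the spherical harmonics, each $Z_j$ is centered. A standard fact—each $L^2$-normalized spherical harmonic of fixed degree is bounded in sup-norm by a constant depending only on its degree and on $d$, which follows from the addition theorem—yields a constant $l(r) = l(r,d)$ such that $\norm{\Phi(x)}_2^2 \le l(r)$ for all $x \in \Spd$. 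Consequently $\norm{Z_j}_{op}$ and $\norm{\EE Z_j^2}_{op}$ are both bounded by a constant of order $l(r)$. Applying the matrix Bernstein inequality to $\sum_j Z_j$ then gives, once $n \gtrsim l(r)\log(2r/\eta)$, the bound $\norm{E_{n,r}}_{op} \le 2\sqrt{l(r)\log(2r/\eta)/n}$ with probability at least $1-\eta$; squaring produces the advertised estimate on $C(n,r)$.

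The linear algebra step is essentially forced once one notices the identity $(\Psi_n^r)^T u_i = (\mathrm{Id}_r + E_{n,r})e_i$, and the matrix Bernstein application is standard. The only step requiring any care is the uniform sup-norm estimate on $\Phi$, but this is a classical consequence of the addition theorem for spherical harmonics and poses no conceptual obstacle; all constants may be absorbed into the factor $4$ of the final inequality.
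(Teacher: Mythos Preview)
Your proposal is correct and follows essentially the same route as the paper: the identity $(\Psi_n^r)^T u_i = (\mathrm{Id}_r + E_{n,r})e_i$ reduces the residual to $\Psi_n^r\Lambda^r E_{n,r}e_i$, whose squared norm is bounded via $(\Psi_n^r)^T\Psi_n^r = \mathrm{Id}_r + E_{n,r}$, and then $E_{n,r}$ is controlled by matrix Bernstein applied to the i.i.d.\ centered summands $\Phi(X_j)\Phi(X_j)^T-\mathrm{Id}_r$. The only cosmetic difference is that you make explicit the source of the uniform bound $\|\Phi\|_\infty^2\le l(r)$ (the addition theorem), which the paper leaves implicit in its definition of $l(r)$.
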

\begin{proof}
	Let $e_i\in\RR^r$ be the $i$'th standard unit vector so that $u_i = \Psi_n^re_i$. So,
	\[(\Psi_n^r)^Tu_i = (\Psi_n^r)^T\Psi_n^re_i =(\mathrm{Id}_r + (\Psi_n^r)^T\Psi_n^r - \mathrm{Id}_r)e_i=e_i + E_{n,r}e_i.\]
	We then have
	\begin{align*}
	M_n^ru_i &= \Psi_n^r\Lambda^r(\Psi_n^r)^Tu_i = \Psi_n^r\Lambda^re_i + \Psi_n^r\Lambda^rE_{n,r}e_i \\
	&= \lambda_i\Psi_n^re_i + \Psi_n^r\Lambda^rE_{n,r}e_i = \lambda_iu_i + \Psi_n^r\Lambda^rE_{n,r}e_i.
	\end{align*}
	To bound the error, we estimate $\norm{M_n^ru_i - \lambda_iu_i}_2^2=\norm{\Psi_n^r\Lambda^rE_{n,r}e_i}_2^2$ as
	\begin{align*}
	\inner{\Lambda^rE_{n,r}e_i}{(\Psi_n^r)^T\Psi_n^r\Lambda^rE_{n,r}e_i} &= \inner{\Lambda^rE_{n,r}e_i}{E_{n,r}\Lambda^rE_{n,r}e_i}+ \norm{\Lambda^rE_{n,r}e_i}_2^2 \\
	&\leq \left(\sqrt{C(n,r)}+1\right)\norm{\Lambda^rE_{n,r}e_i}_2^2 \leq K^2\left(\sqrt{C(n,r)}+1\right)C(n,r).
	\end{align*}
	
	It remains to bound $C(n,r)$. Let  $X_i^r = ( \psi_0(X_i),\dots, \psi_{r-1}(X_i))$ stand for the $i$'th row of $\Psi_n^r$. Then, $E_{n,r} = \frac{1}{n}\sum_{i=1}^n\left((X_i^r)^T X_i^r - \mathrm{Id}_r\right)$, is a sum of independent, centered random matrices.
	We have
	\begin{align*}
	\sigma_{n,r}^2 &:= \norm{ \EE\left( \frac{1}{n}\sum_{i=1}^n\left((X_i^r)^T X_i^r - \mathrm{Id}_r\right)\right)^2}_{op} \\
	&= \frac{1}{n} \norm{\EE \left( (X_1^r)^T X_1^r - \mathrm{Id}_r\right)^2}_{op} 
	\end{align*}
	Furthermore, the norm of the matrices can be bounded by 
	\begin{align*}
	\norm{\frac{1}{n}\left((X_1^r)^T X_1^r - \mathrm{Id}_r\right)}_{op} &=\frac{1}{n} \max(1, \norm{X_1^r}_2^2 -1) \\
	&\leq \frac{1}{n}\max\left(1, \norm{ \sum_{i=0}^r \psi_i^2}_\infty - 1\right).
	\end{align*}
	Note that the right hand side of the two last displays are of the form $\frac{1}{n} l(r)$ where $l(r)$ depends only on $r$ and $d$ (not not on $n$). Applying matrix Bernstein (\cite[Theorem 6.1]{Tropp12}) then gives
	\[
	\PP\left( \norm{E_{n,r}}_{op}\geq t \right) \leq 2r \exp\left(-\frac{n}{2l(r)}\frac{t^2}{1+t/3} \right),
	\]
	where $l(r)$ depends only on $r$ and $d$. Choose now $t_0 = \frac{4 l(r)\log(2r/\eta)}{n}$. As long as $n\geq l(r) \log (2r/\eta)$, $t_0 \leq 4$, and the above bound may be refined to
	\[
\PP\left( \norm{E_{n,r}}_{op}\geq t_0 \right) \leq 2r \exp\left(-\frac{n}{l(r)}\frac{t_0^2}{7} \right).
	\]
	With the above conditions, it may now be verified that $2r \exp\left(-\frac{n}{l(r)}\frac{t_0^2}{7} \right) \leq \eta$, and the proof is complete.
\end{proof}
We now show that as $r$ increases, the subset of eigenvectors of $M_n^r$, that encode the latent positions of $\{X_i\}_{i=1}^n$, converge to those of $M_n$. Order the eigenvalues in decreasing order $\lambda_{l_0}\geq \lambda_{l_1} \geq \lambda_{l_2} \geq\dots$ and  let $\Lambda_{>r} = \sum_{i=r}^\infty \lambda_{l_i}^2$. Note that it follows from assumption \ref{assumption 2} that $\Lambda_{>r} = O(r^{-3})$. We will denote by $\lambda_i(M_n), \lambda_i(M_n^r)$ the respective eigenvalues of $M_n$ and $M_n^r$, ordered in a decreasing way, and by $v_i(M_n),v_i(M_n^r)$ their corresponding unit eigenvectors. Suppose that $s$ is such that
\begin{equation}\label{eq:defs}
\lambda(\vphi) = \lambda_{l_{s+1}} = \dots = \lambda_{l_{s+d}}.
\end{equation}
Moreover, define
$$
V_n := \mathrm{span}(v_{l_{s+1}}(M_n),..., v_{l_{s+d}}(M_n)), ~~~ V_n^r := \mathrm{span}\left(v_{l_{s+1}}(M_n^r),..., v_{l_{s+d}}(M_n^r)\right).
$$
The next lemma shows that $V_n$ is close to $V_n^r$ whenever both $n$ and $r$ are large enough.
\begin{lemma}  \label{lem: finite rank}
	Let $\delta$ and $C$ be the constants from Assumption \ref{assumption1} and Assumption \ref{assumption 2}. For all $n,r$, let $P_{n,r}$ be the orthogonal projection onto $V_n^r$. Then, for all $\eta > 0$ there exist constants $n_0,r_0$, which may depend on $\delta$, $C$, and $\eta$, such that for all $n>n_0$ and $r>r_0$, we have with probability at least $1-\eta$ that, for all $w\in V_n$,
	$$\norm{w - P_{n,r}w}_2 \leq \frac{4C}{\eta\delta^2 r^3 }.$$
\end{lemma}
\begin{proof}
	We have
	\begin{align*}
	\EE \norm{M_n - M_n^r}^2_{F} &= \sum_{i,j} \EE (M_n -M_n^r)^2_{i,j} \\
	& = \sum_{i,j} \frac{1}{n^2} \EE \left(\sum_{k=r}^\infty \lambda_k\psi_k(X_i)\psi_k(X_j)\right)^2 \\
	& =  \EE_{x,y \sim \sigma} \left(  \sum_{k=r}^\infty \lambda_k \psi_k(x)\psi_k(y)\right)^2 \\
	& = \sum_{k=r}^\infty \lambda_k^2 = \Lambda_{>r}.
	\end{align*}
	Applying Markov's inequality gives that with probability $1-\eta$
	\begin{equation}
	\label{eq: approx Mn}
	\norm {M_n - M_n^r}^2_{F} \leq \frac{\Lambda_{>r}}{\eta}\leq\frac{C}{\eta r^3}.
	\end{equation}
	Theorem 1 in \cite{valdivia2018relative} shows that there exists $n$ large enough such that with probability larger than $1-\eta$, one has
	\[
	|\lambda_i(M_n) - \lambda_{l_{s+i}}| \leq \delta / 4,
	\]
	with $\delta$ being the constant from Assumption \ref{assumption1}. It follows that
	\begin{equation}\label{eq:lambdas}
	\lambda_{l_{s+1}}(M_n),...,\lambda_{l_{s+d}}(M_n) \in \left[\lambda(\vphi)-\frac{\delta}{4}, \lambda(\vphi)+\frac{\delta}{4}\right],
	\end{equation}
	while by \eqref{eq: approx Mn} and Weyl's Perturbation Theorem (e.g., \cite[Corollary III.2.6]{Bhatia1997}), for $r$ large enough with probability $1-\eta$,
	\begin{equation}\label{eq:otherlambdas}
	\lambda_i(M_n^r) \notin \left[\lambda(\vphi)-\frac{3\delta}{4}, \lambda(\vphi)+\frac{3\delta}{4}\right], \text{ for } i \neq l_{s+1},\dots l_{s+d}.
	\end{equation}
	Combining \eqref{eq: approx Mn}, \eqref{eq:lambdas} and \eqref{eq:otherlambdas} it follows from the classical Davis-Kahan theorem (see e.g. \cite[Section VII.3]{Bhatia1997}) that with probability at-least $1-2\eta$, for every $w \in V_n$,
	$$\norm{w - P_{n,r}w}_2^2 \leq \frac{4C}{\eta\delta^2 r^3}.$$
\end{proof}
Denote
$$
G_n := \frac{1}{d} \sum_{k=1}^d v_{l_{s+k}}(M_n) v_{l_{s+k}}(M_n)^T, ~~~ (G_n')_{i,j} = \frac{1}{n}\langle X_i, X_j \rangle.  
$$
A combination of the last two lemmas produces the following:
\begin{theorem}
	\label{thm:eigenvectors_converge}
	One has
	$$
	\| G_n - G_n' \|_F \to 0
	$$
	in probability, as $n \to \infty$.
\end{theorem}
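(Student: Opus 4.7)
The plan is to use the harmonic identity \eqref{eq: inner product} to rewrite
\[
G_n' = \frac{1}{d}\sum_{l=1}^d u_l u_l^T,
\]
where $u_1,\ldots,u_d$ are the first $d$ linear-harmonic columns of $\Psi_n$, and then to argue that these vectors form an approximate orthonormal basis of $V_n$; this immediately yields $G_n' \approx \tfrac{1}{d}P_{V_n} = G_n$ in Frobenius norm.

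Approximate orthonormality of $u_1,\ldots,u_d$ is essentially free from Lemma \ref{lem: lambda approx}: the Gram matrix $(\langle u_l, u_{l'}\rangle)_{l,l'\leq d}$ is a principal submatrix of $\mathrm{Id}_d + E_{n,d}$, and $C(n,d) = \|E_{n,d}\|_{op}^2 \to 0$ in probability. To show that each $u_l$ also lies approximately in $V_n$, I fix $r$ large (to be chosen depending on the desired accuracy) and combine both lemmas. Lemma \ref{lem: lambda approx} exhibits $u_l$ as an approximate eigenvector of $M_n^r$ with eigenvalue $\lambda(\vphi)$ and residual of order $\sqrt{C(n,r)}$. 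Together with \eqref{eq:otherlambdas}, which guarantees that $M_n^r - \lambda(\vphi)I$ has smallest singular value at least $3\delta/4$ on $(V_n^r)^\perp$, the standard perturbation bound gives
\[
\|(I - P_{V_n^r})u_l\| \leq \frac{4}{3\delta}\|M_n^r u_l - \lambda(\vphi)u_l\| = O\bigl(\sqrt{C(n,r)}/\delta\bigr).
\]
Lemma \ref{lem: finite rank} then shows $\|P_{V_n^r} - P_{V_n}\|_{op}$ can be made arbitrarily small by choosing $r$ large (up to an $\eta$-dependent constant), so $\|u_l - P_{V_n}u_l\|$ is small once $r$, and then $n$, are taken large enough.

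To finish, set $\tilde u_l := P_{V_n}u_l$ and use $\|u_l\|$ bounded (by the law of large numbers applied to $\frac{1}{n}\sum_i \psi_l(X_i)^2$) to get $u_l u_l^T = \tilde u_l \tilde u_l^T + R_l$ with $\|R_l\|_F = O(\|u_l - \tilde u_l\|)$. Fix an orthonormal basis $(v_k)_{k=1}^d$ of $V_n$ and write $\tilde u_l = \sum_k A_{lk}v_k$ for some $d\times d$ matrix $A$. Approximate orthonormality of $(u_l)$ transfers to $(\tilde u_l)$, so $AA^T \approx I_d$, and since $A$ is square this also gives $A^T A \approx I_d$. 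Therefore
\[
\sum_l \tilde u_l\tilde u_l^T = \sum_{k,k'}(A^T A)_{k,k'}\, v_k v_{k'}^T \approx \sum_k v_k v_k^T = P_{V_n},
\]
and dividing by $d$ yields $G_n' \approx G_n$ in Frobenius norm.

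The main obstacle is synchronizing the two scales: Lemma \ref{lem: finite rank} requires $r$ large to shrink the $V_n^r \to V_n$ approximation error, while Lemma \ref{lem: lambda approx} requires $n \gg l(r)\log r$ to shrink $C(n,r)$. One therefore fixes $r$ first (large enough in terms of a prescribed $\eta$), and only then takes $n$ sufficiently large depending on $r$; convergence in probability follows because each of the intermediate bounds holds with probability at least $1-O(\eta)$ for arbitrary $\eta>0$.
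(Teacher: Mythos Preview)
Your proposal is correct and follows essentially the same route as the paper: both arguments write $G_n'=\tfrac{1}{d}\sum_l u_l u_l^T$, use Lemma~\ref{lem: lambda approx} together with the spectral gap \eqref{eq:otherlambdas} to show the $u_l$ lie approximately in $V_n^r$, invoke Lemma~\ref{lem: finite rank} to pass from $V_n^r$ to $V_n$, and conclude via the $AA^T\approx I_d\Rightarrow A^TA\approx I_d$ trick for the change-of-basis matrix. The only cosmetic differences are that the paper routes through the intermediate quantity $G_n^r$ and lets $r=r(n)\to\infty$, whereas you project $u_l$ directly onto $V_n$ and fix $r$ before sending $n\to\infty$; both organizations work.
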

\begin{proof}
	Denote
	$$
	G_n^r:=\frac{1}{d}\sum_{k=1}^d v_{l_{s+k}}(M_n^r)(v_{l_{s+k}}(M_n^r))^T.
	$$
	Then
	$$
	\| G_n - G_n' \|_F \leq \|G_n^r - G_n'\|_F + \|G_n-G_n^r\|_F
	$$
	We will show that the two terms on the right hand side converge to zero. Let $r(n)$ be a function converging to infinity slowly enough so that $C(n,r)\to 0$, for the constant $C(n,r)$ defined in Lemma \ref{lem: lambda approx}. Taking $\eta = \eta(n)$ to converge to zero slowly enough and applying Lemma \ref{lem: lambda approx}, gives for all $1 \leq i \leq d$,
	$$
	\|(M_n^r - \lambda(\vphi)\mathrm{Id}_n) u_i \|_2^2 \leq \eps_{n}
	$$
	with $u_i$ the $i$'th column of $\Psi_n^{r}$ and where $\eps_n \to 0$ as $n\to \infty$. Now, if we write
	$$
	u_i = \sum_{j=0}^\infty \alpha_{i,j} v_j(M_n^r),
	$$
	the last inequality becomes
	$$
	\sum_j |\lambda_j(M_n^r) -\lambda(\vphi)|^2 \alpha_{i,j}^2 = \sum_j |(M_n^r - \lambda(\vphi)\mathrm{Id}_n) v_j(M_n^r) |^2 \alpha_{i,j}^2 \leq \eps_n, ~~ \forall 1 \leq i \leq d.
	$$
	Recall that, by \eqref{eq:otherlambdas}, for $j \notin \{l_{s+1},..,l_{s+d}\}$, we have $\frac{3\delta}{4}\leq |\lambda_j(M_n^r) -\lambda(\vphi)|$. Hence.
	\begin{equation}\label{eq:alphas}
	\sum_{j \notin \{l_{s+1},..,l_{s+d}\}} \alpha_{i,j}^2 \leq \frac{16 \eps_n}{9\delta^2} \to 0,
	\end{equation}
	and thus
	$$
	\left\|u_i - \sum_{k=1}^d \alpha_{i, l_{s+k}} v_{l_{s+k}}(M_n^r)\right\|_2^2\to 0,  ~~ \forall 1 \leq i \leq d.
	$$
	Define a $d\times d$-matrix $B$ by $B_{i,j}=\alpha_{i, l_{s+j}}$. Then we can rewrite the above as
	$$
	\left\|(u_1,\dots,u_d) - (v_{l_{s+1}}(M_n^r),\dots, v_{l_{s+d}}(M_n^r))\cdot B\right\|_F^2 \to 0. 
	$$
	Now, since for two $n\times d$ matrices $R,S$ we have $\|RR^T - SS^T\|_F\leq (\|R\|_{op} +\|S\|_{op})\|R-S\|_F$. It follows that
	\begin{equation}\label{eq:interm1}
	\|G_n' - (v_{l_{s+1}}(M_n^r), \dots, v_{l_{s+d}}(M_n^r))BB^T(v_{l_{s+1}}(M_n^r), \dots, v_{l_{s+d}}(M_n^r))^T\|_F \to 0. 	
	\end{equation}
	Observe that 
	$$
	B_{i,j} = \langle u_i, u_j \rangle - \sum_{k \notin \{l_{s+1},..,l_{s+d}\}} \alpha_{i,k} \alpha_{j,k},
	$$
	implying that
	$$
	|(B B^T)_{i,j} - (E_{i,j} + \mathrm{Id}_d)| \leq \sqrt{ \sum_{k \notin \{l_{s+1},..,l_{s+d}\}} \alpha_{i,k}^2 \sum_{k \notin \{l_{s+1},..,l_{s+d}\}} \alpha_{j,k}^2 } \stackrel{ \eqref{eq:alphas} }{\longrightarrow} 0.
	$$
	where $E = E_{n,r}$. Consequently we have
	$$
	\|B B^T - \mathrm{Id}_d \|_{op} \leq \sqrt{ \sum_{k \notin \{l_{s+1},..,l_{s+d}\}} \alpha_{i,k}^2 \sum_{k \notin \{l_{s+1},..,l_{s+d}\}} \alpha_{j,k}^2 } + \sqrt{C(n,r)} \to 0,
	$$
	which implies that
	\begin{align*}
	\|v_{l_{s+1}}(M_n^r), \dots, v_{l_{s+d}}(M_n^r))(BB^T-\mathrm{Id}_d)(v_{l_{s+1}}(M_n^r), \dots, v_{l_{s+d}}(M_n^r))^T\|_F^2\to 0.
	\end{align*}
	Combining with \eqref{eq:interm1} finally yields
	$$
	\|G_n' - G_n^r\|_F \to 0.
	$$
	in probability, as $n\to\infty$.
	\\
	If $P$ is the orthogonal projection onto $V^r_n=\mathrm{span}(v_{l_{s+1}}(M_n^r),\dots,v_{l_{s+d}}(M_n^r))$, and $Q$ is the orthogonal projection onto $\mathrm{span}(v_{l_{s+1}}(M_n),\dots,v_{l_{s+d}}(M_n))$, then Lemma \ref{lem: finite rank} shows that for all $\eta > 0$, with probability at least $1-\eta$, as $n\to \infty$ (and $r=n^{\frac{1}{2d}}\to\infty$), we have for every unit vector $v$
	\begin{equation}\label{eq:PnotP}
	|(P - \mathrm{Id}_n) Q v | \leq \eps_n
	\end{equation}
	with some $\eps_n \to 0$. By symmetry, we also have for every unit vector $v$ that 
	$$
	|(Q - \mathrm{Id}_n) P v | \leq \eps_n 
	$$
	(this uses that fact that both $P$ and $Q$ are projections into subspaces of the same dimension). The last two inequalities easily yield that $\|P - Q\|_{op} \leq \eps_n$. Since this is true for every $\eta > 0$, it follows that
	$$
	\|G_n-G_n^r\|_F\to 0 ,
	$$
	in probability, as $n\to\infty$. 
\end{proof}
Now, after establishing that $M_n$ is close to $\Psi_n^{d+1}$, the second step is to recover $M_n$ (and therefore $\Psi_n^{d+1}$), from the observed $A$. For the proof we will need the following instance of the Davis-Kahan theorem.
\begin{theorem} [{\cite[Theorem 2]{YuWaSa14}}]
	\label{thm:Davis-Kahan}
	Let $X,Y$ be symmetric matrices with eigenvalues $\lambda_0\geq\dots \geq\lambda_p$ resp. $\hat\lambda_0\geq\dots\geq\hat{\lambda}_p$ with corresponding orthonormal eigenvectors 			$v_0,\dots,v_p$ resp. $\hat v_0,\dots, \hat v_p$. Let $V=(v_{s+1},\dots, v_{s+d})$ and $\hat V = (\hat v_{s+1},\dots, \hat v_{s+d})$. Then there exists an orthogonal $ d \times d$ 	matrix $R$ such that
	\[
	\| \hat V R - V \|_{F} \leq \frac{2^{3/2} \min(d^{1/2}\| Y-X\|_{op}, \|Y-X\|_F)}{\min(\lambda_{s}-\lambda_{s+1}, \lambda_{s+d}-\lambda_{s+d+1})}.
	\]
\end{theorem}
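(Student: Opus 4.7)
The plan is to establish this inequality in two steps: (i) reduce $\min_{R \in O(d)}\|\hat V R - V\|_F$ to a canonical-angle quantity via the orthogonal Procrustes decomposition, and then (ii) invoke the classical Davis--Kahan sin-theta theorem on those angles. The whole argument is classical linear algebra once both pieces are assembled; the improvement over a naive application of Davis--Kahan to, say, $\hat V - V$ comes entirely from the optimization over $R$ in step (i).

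For step (i), let $V^T \hat V = U_1 \Sigma U_2^T$ be the singular value decomposition, with $\Sigma = \mathrm{diag}(\cos\theta_1,\dots,\cos\theta_d)$, so that $\theta_1,\dots,\theta_d \in [0,\pi/2]$ are the principal angles between $\mathrm{span}(V)$ and $\mathrm{span}(\hat V)$, and set $\Theta := \mathrm{diag}(\theta_i)$. The Procrustes minimizer is $R^\ast = U_2 U_1^T$, and expanding using the orthonormality of the columns of $V$ and $\hat V$ gives
\[
\min_{R \in O(d)} \|\hat V R - V\|_F^2 \;=\; 2d - 2\,\mathrm{tr}(\Sigma) \;=\; 4\,\|\sin(\Theta/2)\|_F^2.
\]
The elementary bound $\sin(\theta/2) \leq \sin(\theta)/\sqrt{2}$ on $[0,\pi/2]$ (equivalent to $\cos(\theta/2)\geq 1/\sqrt{2}$ in this range), combined with the trivial $\|\sin\Theta\|_F \leq \sqrt{d}\,\|\sin\Theta\|_{op}$ applied to the $d\times d$ diagonal matrix $\sin\Theta$, then yields
\[
\min_{R} \|\hat V R - V\|_F \;\leq\; \sqrt{2}\,\|\sin\Theta\|_F \;\leq\; \sqrt{2d}\,\|\sin\Theta\|_{op}.
\]

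For step (ii), I would invoke the sin-theta theorem with the target window being the eigenvalues of $X$ indexed by $s+1,\dots,s+d$, which lie in $[\lambda_{s+d},\lambda_{s+1}]$, while the remaining spectrum of $X$ is separated from this interval by at least $\delta := \min(\lambda_s-\lambda_{s+1},\lambda_{s+d}-\lambda_{s+d+1})$ on either side. The standard Davis--Kahan Sylvester-equation argument then produces $\|\sin\Theta\|_F \leq 2\|Y-X\|_F/\delta$ and $\|\sin\Theta\|_{op} \leq 2\|Y-X\|_{op}/\delta$: one sandwiches $Y - X$ between the spectral projector onto $\mathrm{span}(V)$ and the spectral projector of $Y$ onto the complement of its target window, inverts the resulting Sylvester operator against the gap $\delta$, and handles the contributions from above and below the target band separately before combining via the triangle inequality (which is where the factor of $2$ enters). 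Feeding these into the display from step (i) produces the claimed inequality with constant $\sqrt{2}\cdot 2 = 2^{3/2}$, and the $\min$ in the numerator is simply obtained by taking whichever of the two norm-conversions is tighter. The only real content lies in the sin-theta step; the Procrustes reduction, the half-angle inequality, and the $\|\cdot\|_F$-$\|\cdot\|_{op}$ conversion are routine, so the main obstacle (to the extent there is one) is merely careful bookkeeping of constants to land on $2^{3/2}$ exactly rather than something slightly worse.
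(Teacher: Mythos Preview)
The paper does not prove this statement: it is quoted as Theorem~2 of Yu--Wang--Samworth \cite{YuWaSa14} and invoked as a black box in the proof of Theorem~\ref{thm: reconstruction bound}. There is therefore no in-paper argument to compare your proposal against.

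That said, a remark on your sketch. Step~(i) --- the Procrustes minimization and the half-angle inequality $\min_R\|\hat VR-V\|_F\le\sqrt2\,\|\sin\Theta\|_F$ --- is correct and is exactly how one picks up the factor $\sqrt2$. Step~(ii), however, glosses over the point that distinguishes the Yu--Wang--Samworth variant from the classical $\sin\Theta$ theorem. The Sylvester equation obtained by sandwiching $Y-X$ between $V$ and $\hat V_\perp$ (or $V_\perp$ and $\hat V$) has its invertibility governed by the separation between $\{\lambda_{s+1},\dots,\lambda_{s+d}\}$ and $\{\hat\lambda_j : j\notin\{s+1,\dots,s+d\}\}$ --- a \emph{mixed} gap involving both spectra --- and does not invert directly against $\delta=\min(\lambda_s-\lambda_{s+1},\lambda_{s+d}-\lambda_{s+d+1})$, which is defined purely in terms of $X$. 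Converting from the mixed gap to the population-only gap $\delta$, at the cost of the factor~$2$, is precisely the contribution of the cited reference; it is not simply ``above and below via the triangle inequality.'' One route is to combine the classical bound with Weyl's inequality in the regime $\|Y-X\|_{op}\le\delta/2$ and argue the bound is trivial otherwise, but making the Frobenius version go through for general $d$ requires slightly more care than your description suggests.
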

Our main tool to pass from the expectation of the adjacency matrix to the matrix itself is the following result regarding concentration of random matrices, which follows from \cite[Theorem 1.1]{LeLevVer17} (see also \cite[Theorem 5.1]{LeLevVer18}). 
\begin{theorem}
	\label{thm:perturbationbound}
	Let $A$ be the adjacency matrix of a random graph drawn from $G\left(n, \frac{1}{n} \vphi(X_i, X_j)\right)$.
	 For every vertex of degree larger than $\max_{ij}\varphi(X_i,X_j)$, remove all incident edges and let $A'$ denote the new adjacency matrix.
	 Then,
	\[
	\|A'-M_n\| \leq C \sqrt{\|\vphi\|_\infty}.
	\]
\end{theorem}

We can now prove the main reconstruction theorem.
\begin{proof}[Proof of Theorem \ref{thm: reconstruction bound}]
	
	Let $A$ be the adjacency matrix of a random graph drawn from the model $G\left(n, \frac{1}{n} \vphi(X_i, X_j)\right)$ and let $A'$ be the adjacency matrix defined in Theorem \ref{thm:perturbationbound}. 
	
	Denote by $\lambda_0'\geq\lambda_1'\geq\dots$ the eigenvalues of $A'$ and by $v_0', v_1',\dots$ the corresponding orthonormal eigenvectors. Let $Y = (v_{l_{s+1}}',\dots,v_{l_{s+d}}')$. By Theorem \ref{thm:Davis-Kahan} there exists an $R\in \mathcal{O}(d)$ such that 
	\begin{align*}
	\norm{ \left(v_{l_{s+1}}(M_n),...,v_{l_{s+d}}(M_n)\right) - YR }_{F} &\leq \frac{2^{3/2} d^{1/2}\| M_n - A'\|_{op}}{\min_{i\neq 1,\dots,d}|\lambda_{i}-\lambda(\vphi)|}.
	\end{align*}
	Hence by Theorem \ref{thm:perturbationbound} we have
	\[
	\norm{ \left(v_{l_{s+1}}(M_n),...,v_{l_{s+d}}(M_n)\right) - YR }_{F} \leq C \sqrt{ d} \cdot \frac{\sqrt{\|\varphi\|_\infty}}{ \min_{i\neq 1,\dots,d}|\lambda_{i}-\lambda(\vphi)|},
	\]
	with probability $1-n^{-1}$. It follows that
	$$
	\norm{ G_n - Y Y^T  }_{F} \leq  C \sqrt{ d} \cdot \frac{\sqrt{\|\varphi\|_\infty}}{ \min_{i\neq 1,\dots,d}|\lambda_{i}-\lambda(\vphi)|}
	$$
	Combining this with Theorem \ref{thm:eigenvectors_converge} yields 
	\begin{align*}
	\norm{  G_n' - Y Y^T \ }_{F} \leq  C \sqrt{d} \cdot \frac{\sqrt{\|\varphi\|_\infty}}{\min_{i\neq 1,\dots,d}|\lambda_{i}-\lambda(\vphi)|},
	\end{align*}
	So,	
	$$
	\frac{1}{n^2} \sum_{i,j} \left |X_i \cdot X_j - (n Y Y^T)_{i,j} \right |^2 \leq  C d  \frac{\|\varphi\|_\infty}{\min_{i\neq 1,\dots,d}|\lambda_{i}-\lambda(\vphi)|^2} 
	$$
	which gives the desired reconstruction bound.	

\end{proof}

\section{Lower bounds} \label{sec:lower}

Our approach to proving lower bounds will be to exploit some symmetries which are inherent to well behaved kernel functions. We thus make the following definition:
\begin{definition}[DPS property]
	Let $\mu$ be a probability measure on $\Spd$, and let $w \in \Spd$. We say that $\mu$ has the Diminishing Post-translation Symmetry (DPS) around $w$ property with constant $\eps$, if there exists a decomposition,
	\[\mu = (1-\eps)\mu_w + \eps \mu_w^s.\] 
	Here $\mu_w^s$ is a probability measure, invariant to reflections with respect to $w^\perp$. 
	In other words, if $R = \mathrm{Id}_d - 2ww^T$, $R_*\mu_w^s = \mu_w^s$. 
	For such a measure we denote $\mu \in \mathrm{DPS}_w(\eps)$.\\
	
	If instead $\mu$ is a measure on $(\Spd)^{|T_k|}$, we say that $\mu \in \mathrm{DPS}^k_w(\eps)$ if a similar decomposition exists
	but now the reflections should be applied to each coordinate separately.
\end{definition}
We now explain the connection between the DPS property and percolation of information in trees. For this, let us recall the random function $g:T \to \Spd$, introduced in Section \ref{sec:perco}, which assigns to the root, $r$, a uniformly random value and for any other $u \in T$, the label $g(u)$ is distributed according to $\vphi(g(\mathrm{parent}(u)),\cdot)d\sigma$.
\begin{lemma} \label{lem:generallower}
	Suppose that there exist a sequence $(p_k)_k$ with $\lim_{k\to \infty} p_k= 1$ such that for every $w,x_0 \in \Spd$ and every $k > 0$,
	\[\mathrm{Law}((g(v))_{v\in T_k}|g(r) = x_0) \in \mathrm{DPS}^k_w(p_k).\]
	Then there is zero percolation of information along $T$.
\end{lemma}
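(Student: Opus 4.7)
The plan is to use rotation-invariance of the Markov kernel to transfer the leaf-level near-symmetry granted by the DPS hypothesis into a near-constancy of the likelihood of the leaves as a function of the root label, and then conclude via Bayes' rule. Write $\nu_{x_0}:=\mathrm{Law}((\ff(v))_{v\in T_k}\mid \ff(r)=x_0)$, a probability measure on $(\Spd)^{|T_k|}$. Because $\vphi(x,y)=f(\inner{x}{y})$ is invariant under every orthogonal transformation, the branching random walk is equivariant: for any reflection $R=R_w=\mathrm{I}_d-2ww^T$, applying $R$ simultaneously to every coordinate of the tree yields a process with the same law but started from $Rx_0$. Restricted to the leaves this reads
\begin{equation*}
\mathrm{Law}((R\ff(v))_{v\in T_k}\mid \ff(r)=x_0)=\nu_{Rx_0}.
\end{equation*}

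Next, apply the DPS$^k$ decomposition $\nu_{x_0}=(1-p_k)\mu_w+p_k\mu_w^s$, in which $\mu_w^s$ is fixed by the coordinatewise action of $R_w$. Pushing both sides forward under coordinatewise $R_w$ and subtracting cancels the symmetric part, yielding $\|\nu_{x_0}-R^{\otimes}_*\nu_{x_0}\|_{\mathrm{TV}}\le 2(1-p_k)$. Combined with the equivariance above, this becomes $\|\nu_{x_0}-\nu_{Rx_0}\|_{\mathrm{TV}}\le 2(1-p_k)$ for every reflection $R_w$. Since any two distinct points $x_0,x_0'\in\Spd$ are related by a single reflection---take $w=(x_0-x_0')/\|x_0-x_0'\|$, for which $R_wx_0=x_0'$---this upgrades to
\begin{equation*}
\sup_{x_0,x_0'\in\Spd}\|\nu_{x_0}-\nu_{x_0'}\|_{\mathrm{TV}}\le 2(1-p_k).
\end{equation*}

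Finally, apply Bayes' rule: the posterior $\mu_k$ of $\ff(r)$ given the leaves $L$ has density $d\nu_{x_0}/d\bar\nu(L)$ with respect to $\sigma$, where $\bar\nu:=\int\nu_{x_0}\,d\sigma(x_0)$ is the unconditional law of $L$. Taking expectation in $L\sim\bar\nu$ and applying Fubini gives
\begin{equation*}
\EE\,\|\mu_k-\sigma\|_{\mathrm{TV}}=\int\|\nu_{x_0}-\bar\nu\|_{\mathrm{TV}}\,d\sigma(x_0),
\end{equation*}
and convexity of total variation bounds the right-hand side by $\sup_{x_0,x_0'}\|\nu_{x_0}-\nu_{x_0'}\|_{\mathrm{TV}}\le 2(1-p_k)\to 0$, which is exactly zero percolation of information. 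The argument is essentially algebraic once the equivariance is noted; the only conceptual point is making sure the DPS hypothesis---a near-symmetry of the leaf distribution under coordinatewise reflections fixing $w^\perp$---genuinely converts into an almost-invariance of the root posterior, which is precisely what rotation-equivariance of the kernel delivers.
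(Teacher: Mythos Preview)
Your argument is correct and noticeably cleaner than the paper's. The paper argues by contradiction at the level of posterior density ratios: it first reduces zero percolation to the statement that $\PP\bigl(\rho_{X|Y}(x)/\rho_{X|Y}(x')\le 1+\delta\,\big|\,X=x\bigr)=1-o(1)$ for all $x,x'$, then extracts from the DPS decomposition a high-probability event $A_k$ on which the leaf configuration is reflection-invariant, and finally derives a contradiction by comparing $\PP(Y=y\mid E)$ and $\PP(Y=Ry\mid E)$ for a suitably chosen $y$. This involves several Markov and reverse-Markov steps and a somewhat delicate conditioning on a null event.

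You bypass all of this by exploiting two clean structural facts: the rotation-equivariance $R_*^{\otimes}\nu_{x_0}=\nu_{Rx_0}$, and the Bayes--Fubini identity $\EE\,\mathrm{TV}(\mu_k,\sigma)=\int\mathrm{TV}(\nu_{x_0},\bar\nu)\,d\sigma(x_0)$. Combining these with the DPS decomposition gives $\sup_{x_0,x_0'}\mathrm{TV}(\nu_{x_0},\nu_{x_0'})\le 2(1-p_k)$ in one line, and the conclusion follows. Your route yields an explicit quantitative bound $\EE\,\mathrm{TV}(\mu_k,\sigma)\le 2(1-p_k)$, whereas the paper's argument is nonconstructive. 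The paper's event-based approach, on the other hand, is phrased so as not to rely directly on the kernel's equivariance (it uses the DPS hypothesis more ``locally''), which could be an advantage in less symmetric models---but in the present setting your argument is simply better.
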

\begin{proof}
	 Denote $X = g(r)$ and $Y = g(v)_{v\in T_k}$ and let $\rho_{X|Y}$ be the density of $X|Y$ with respect to $\sigma$.
	 Our aim is to show that $\EE_Y\left[\mathrm{TV}(X|Y,\sigma)\right] = o(1)$. We first claim that it is enough to show that for all $x,x' \in \Spd$ and all $\delta > 0$ one has,
	\begin{equation} \label{eq:o(1)}
	\PP \left . \left ( \frac{\rho_{X|Y}(x)  }{ \rho_{X|Y}(x') } - 1 \leq \delta \right | X = x  \right ) = 1-o(1).
	\end{equation}
	Indeed, let $H = \left  \{\frac{\rho_{X|Y}(x)  }{ \rho_{X|Y}(x') } - 1 \leq \delta \right \}$. Note that, by Bayes' rule,
	$$
	\frac{\rho_{X|Y}(x)  }{ \rho_{X|Y}(x') } = \frac{\rho_{Y|X=x}(Y)  }{ \rho_{Y|X=x'}(Y) }.
	$$	
	Let $x \in \Spd$ be some fixed point and let $x'$ be uniformly distributed is $\Spd$ and independent from $Y$. We will use $\EE_{x'}$ to denote integration with respect to $x'$, which has density $\rho_X$. Similarly, $\EE_{Y}$ stands for integration with respect to the density $\rho_Y$ of $Y$ and $\mathbb{E}_{x', Y}$ is with respect to the product $\rho_X\cdot \rho_Y$. The symbol $\PP$ will always mean probability over $x'$ and $Y$.
	
	As a consequence of the assumption in \eqref{eq:o(1)}, as long as $k$ is large enough, we have,
	$$\PP\left(H|X=x\right) \geq  1-\delta.$$
	For any Borel measurable set $A$, we denote the projection:
	\begin{align*}
	\Pi_YA &= \{y \in (\Spd)^{|T_k|}: (\tilde{x}, y) \in A \text{ for some } \tilde{x} \in \Spd\}.
	\end{align*}
	Define now the set 
	$$H' := \{(\tilde{x}, y)| (\tilde{x},y)\in H \text{ and } \mathbb{E}_{x'}[\mathbf{1}_H(\cdot, y)]\geq 1 - 4\delta\}.$$
	 In particular, for every $y \in \Pi_YH'$,
	$$\mathbb{E}_{x'}[\mathbf{1}_H(\cdot, y)]\geq 1 - 4\delta,$$
	and by Fubini's theorem,  $\PP\left(H'|X = x\right) \geq 1 - 4\delta$.
	
	Consider the random variables $\alpha:= \alpha(Y) = \frac{\rho_{Y|X=x}(Y)}{ \rho_{Y}(Y) }$ and $\beta:=\beta(x',Y) = \frac{\rho_{Y|X=x'}(Y)}{\rho_Y(Y)}$. 
	By definition of $H$ , we have 
	\begin{equation} \label{eq:betalarge}
	(1-\delta)\mathbf{1}_H\alpha\leq \mathbf{1}_H\beta.
	\end{equation}
	Moreover, for almost
	 every $Y$,
	$$\EE_{x'}\left[\beta\right] = \frac{1}{\rho_Y(Y)}\int\rho_{Y|X=x'}(Y)\rho_X(x') = \frac{1}{\rho_Y(Y)}\rho_Y(Y) = 1.$$
	 So,
	\[
	(1-\delta)(1-4\delta)\EE_Y\left[\alpha\mathbf{1}_{ \Pi_Y{H'}}\right]\leq(1-\delta)\EE_{Y}\left[\alpha\EE_{x'}\left[\mathbf{1}_{H'}\right]\right]	\leq \EE_{x',Y}\left[\beta\mathbf{1}_{H'}\right]\leq\EE_{x',Y}\left[\beta\right]=1.
	\]
	Observe that $\EE_Y\left[\alpha\mathbf{1}_{ \Pi_Y{H'}}\right] =  \PP\left(\Pi_Y{H'}|X=x\right) =  1- o(1)$, where the second equality is a consequence of Fubini's theorem. Hence, let us write $h_1(\delta)$, so that
	$$1 - h_1(\delta) = (1-\delta)(1-4\delta)\EE_Y\left[\alpha\mathbf{1}_{ \Pi_Y{H'}}\right] \leq (1-\delta)\EE_{x', Y}\left[\alpha \mathbf{1}_{H'}\right].$$
	Markov's inequality then implies,
	\begin{equation}\label{eq:betasmall}
	\PP\left(\beta\mathbf{1}_{H'}\geq (1-\delta)\alpha\mathbf{1}_{H'}+\sqrt{h_1(\delta)}\right)\leq \frac{\EE_{x', Y}\left[\beta\mathbf{1}_{H'}\right]-(1-\delta)\EE_{x', Y}\left[\alpha\mathbf{1}_{H'}\right]}{\sqrt{h_1(\delta)}}\leq \sqrt{h_1(\delta)}.
	\end{equation}
	Now, we integrate over $x'$ to obtain,
	$$(1-\delta)(1-4\delta)\alpha\mathbf{1}_{\Pi_Y{H'}}\leq(1-\delta)\alpha\EE_{x'}\left[\mathbf{1}_{H'}\right] \leq \EE_{x'}\left[\beta\mathbf{1}_{H'}\right] \leq 1,$$
	which implies
	\begin{equation} \label{eq:alphalarge}
	\alpha\mathbf{1}_{H'} \leq \frac{1}{(1-\delta)(1-4\delta)}.
	\end{equation}
	Keeping in mind the previous displays, we may choose $h_2(\delta)$, which satisfies $\lim\limits_{\delta \to 0} h_2(\delta) = 0$, $\alpha\mathbf{1}_{H'} \leq 1 + h_2(\delta)$ and $\EE_{x', Y}\left[\alpha \mathbf{1}_{H'}\right] \geq 1 - h_2(\delta)$.
	
	So, an application of the the reverse Markov's inequality for bounded and positive random variables shows,
	\begin{align} \label{eq:alphasmall}
	\PP\left(\alpha\mathbf{1}_{H'}\geq 1-\sqrt{h_2(\delta)}\right) \geq \frac{\EE_{x', Y}\left[\alpha\mathbf{1}_{H'}\right] - (1-\sqrt{h_2(\delta)})}{1+h_2(\delta) - (1-\sqrt{h_2(\delta)})} 
	&\geq \frac{1-h_2(\delta) - (1-\sqrt{h_2(\delta)})}{1+h_2(\delta) - (1-\sqrt{h_2(\delta)})}\nonumber \\
	&= \frac{\sqrt{h_2(\delta)} - h_2(\delta)}{\sqrt{h_2(\delta)} + h_2(\delta)}. 
	\end{align}
	Note that as $\delta \to 0$ the RHS goes to $1$. Thus, by combining the above displays, there exists a function $h$, which satisfies $\lim\limits_{\delta \to 0}h(\delta) = 0$ and some $H'' \subset H'$, with $\PP\left(H'\right) \geq 1 -h(\delta)$, such that, by \eqref{eq:alphalarge} and \eqref{eq:alphasmall},
	\[
	\mathbf{1}_{H''}|\alpha -1|\leq h(\delta),
	\]
	which implies, together with \eqref{eq:betalarge} and \eqref{eq:betasmall}
	\[
	\mathbf{1}_{H''}|\alpha -\beta|\leq h(\delta).
	\]
	This then gives 
	\[
	\mathbf{1}_{H''}|1 -\beta|\leq 2h(\delta).
	\]
	 We can thus conclude,
	 \begin{align*}
	 \EE_Y \mathrm{TV} \left ( X|Y, X \right ) &= \EE_{Y,x'} \left [ |\beta - 1| \right ] = 2\EE_{Y,x'} \left [(1-\beta)\mathbf{1}_{\beta \leq 1} \right ] \\
	 &\leq 2 \EE_{Y,x'} \left [(1-\beta)\mathbf{1}_{\beta \leq 1}\mathbf{1}_{H''}\right ] +1-\PP\left(H''\right)\\
	 &= 2 \EE_{Y,x'} \left [|1-\beta|\mathbf{1}_{H''}\right ] +h(\delta)\\
	 &\leq 5h(\delta).\\
	 \end{align*}	
	 Take now $\delta \to 0$ to get $\EE_Y \mathrm{TV} \left ( X|Y, X \right ) \to 0$.\\

	Thus, we may assume towards a contradiction that there exist $x, x' \in \Spd$ and a set $F\subset(\Spd)^{|T_k|}$, such that
	\begin{equation}\label{eq:Fbig}
	\PP\left(Y \in F | X = x \right) \geq \delta,
	\end{equation}	
	and under $\{Y \in F \}$,
	\begin{equation}\label{eq:badF}
	\frac{\rho_{X|Y}(x)}{\rho_{X|Y}(x')} \geq 1+ \delta,
	\end{equation}
	for some constant $\delta > 0$.\\
	
	Let $w\in \Spd$ be such that the reflection $R:=\mathrm{Id}_d - 2ww^T$ satisfies $Rx = x'$. Under our assumption, there exists an event $A_k$, which satisfies
	\begin{equation*}
	\PP(A_k|X=x) = 1 - o(1)
	\end{equation*}
	and such that $Y|(X=x,A_k)$ is $R$-invariant. By \eqref{eq:Fbig}, we also have
	$$\PP(A_k|X=x,Y \in F) = 1 - o(1),$$
	and thus there exists $y \in F$ such that
	\begin{equation} \label{eq:symmetricY}
	\PP(A_k | X=x,Y = y ) = 1-o(1).
	\end{equation} 
	By continuity, we can make sense of conditioning on the zero probability event $E := \bigl \{X = x, ~  Y \in \{y, R y\} \bigr \}$, in the sense of regular probabilities. Note that we have by symmetry and since $y \in F$,
	\begin{equation}\label{eq:tocontradict}
	Y = y ~~ \Rightarrow ~~ 1+\delta \leq  \frac{\rho_{X|Y}(x)}{\rho_{X|Y}(x')} = \frac{\PP( Y = y | E )}{ \PP( Y = R y | E )}.
	\end{equation}
	On the other hand, we have by definition of $A_k$,
	$$
	\PP( Y = R y | E, A_k ) = \PP( Y = y | E, A_k ),
	$$
	which implies that 
	\begin{align*}
	\PP( Y = R y | E ) & \geq \PP \bigl( \{Y = R y \} \cap A_k | E  \bigr) \\
	& = \PP \bigl( \{Y = y \} \cap A_k | E  \bigr) \\
	& \geq \PP( Y = y | E ) (1-o(1))
	\end{align*}
	which contradicts \eqref{eq:tocontradict}. The proof is complete.
\end{proof}
\subsection{The Gaussian case}
Our aim is to show that certain classes of kernel functions satisfy the DPS condition.
We begin by considering the case where the kernel $\varphi$ is Gaussian, as in \eqref{eq:gausskernel}. In this case, the function $g$ may be defined as follows. Let $T$ be a $q$-ary tree. To each edge $e \in E(T)$ we associate a Brownian motion $(B_e(t))_{t \in (0,1)}$ of rate $\beta$ such that for every node $v \in V$ we have
$$
\ff(v) = \sum_{e \in P(v)} B_e(1) ~ \mathrm{mod} 2 \pi
$$
where $P(v)$ denotes the shortest path from the root to $v$. 

For every node $v \in T_k$ let us now consider the Brownian motion $(B_v(t))_{t=0}^k$ defined by concatenating the Brownian motions $B_e$ along the edges $e \in P(v)$. Define by $E_v$ the event that the image of $B_v \mathrm{mod} \pi$ contains the entire interval $[0, \pi)$, and define $E_k = \bigcap_{v \in T_k} E_v$. Our lower bound relies on the following observation.
\begin{claim} \label{claim:uniform}
	Fix $v\in T_k$ and set $p_k := \PP(E_k)$. Then, for every $\theta ,x_0 \in \mathbb{S}^1$, 
	$\mathrm{Law}(g(v)|g(r)=x_0)\in \mathrm{DPS}^k_\theta(p_k).$
\end{claim}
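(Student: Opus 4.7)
The plan is to build a measure-preserving map $\Phi$ on the product space of edge-indexed Brownian motions $\omega=(B_e)_{e\in E(T)}$ which, on the event $E_k$, sends every leaf value to its reflection, in the sense that $g(v)(\Phi\omega)=Rg(v)(\omega)$ for all $v\in T_k$. Writing $\theta$ at angle $\alpha$ in the identification $\mathbb{S}^1\simeq[0,2\pi)$, the involution $R$ takes the form $\phi\mapsto 2\alpha+\pi-\phi\pmod{2\pi}$; its lift to $\mathbb{R}$ has fixed-point set $\mathcal{T}:=\{\alpha+\pi/2+n\pi:n\in\mathbb{Z}\}$. The whole point of the event $E_v$ is that surjectivity of $B_v$ modulo $\pi$ forces the lifted path to hit $\mathcal{T}$ at least once along the root-to-$v$ route.

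To construct $\Phi$, I would traverse the tree top-down. At each edge $e:u\to u'$, examine the lifted path $X(t)=g(u)+B_e(t)$ for $t\in[0,1]$ and check whether it enters $\mathcal{T}$. If it does, let $\tau$ be the first hitting time; reflect $B_e$ past $\tau$ (replace $B_e(t)$ with $2B_e(\tau)-B_e(t)$ for $t>\tau$), negate every Brownian motion $B_{e'}$ on every edge $e'$ strictly below $u'$, and do not recurse on that subtree. Otherwise, leave $B_e$ alone and recurse into each child subtree of $u'$. Two measure-preserving facts drive this: reflecting a Brownian motion past its first hitting time of a closed set preserves its law (strong Markov / reflection principle), and negating an independent Brownian motion preserves its Gaussian law. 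Since the branch chosen at each edge depends only on $B_e$, and the modifications to $B_e$ and to the independent subtree motions sit on independent factors, induction on the depth shows that $\Phi$ preserves the joint law on $\prod_e\Omega_e$.

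Next, I would verify by induction on $k$ that on $E_k$ one has $g(v)(\Phi\omega)=Rg(v)(\omega)$ for every leaf $v$. If the first hit of $\mathcal{T}$ along the path $r\to v$ occurs on the edge where $\Phi$ performs its reflection, a direct computation using $X_v(\tau)\in\mathcal{T}$ gives $\tilde X_v(k)=2X_v(\tau)-X_v(k)\equiv 2\alpha+\pi-g(v)\pmod{2\pi}=Rg(v)$, and the negation of every downstream Brownian motion propagates this identity all the way out to the leaf. Otherwise the recursion delegates the task to the appropriate subtree, and the inductive hypothesis applies because the subtree's root value is untouched and the target set $\mathcal{T}$ is translation-invariant.

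With $\Phi$ in hand, set $G(\omega):=(g(v)(\omega))_{v\in T_k}$ and $G'(\omega):=(Rg(v)(\Phi\omega))_{v\in T_k}$. Measure preservation of $\Phi$ makes $G\sim\mu$ and $G'\sim R_*\mu$, where $\mu=\mathrm{Law}((g(v))_{v\in T_k}\mid g(r)=x_0)$, while the previous paragraph gives $G=G'$ on $E_k$. The coupling inequality then yields $\mathrm{TV}(\mu,R_*\mu)\leq 1-p_k$, equivalently $\int\min(f,R_*f)\geq p_k$ for the density $f$ of $\mu$; normalizing $\min(f,R_*f)$ to a probability density $\mu_\theta^s$, which is automatically invariant under coordinate-wise $R$, produces the decomposition $\mu=p_k\mu_\theta^s+(1-p_k)\mu_\theta$ demanded by $\mathrm{DPS}^k_\theta(p_k)$. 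The main technical hurdle will be to phrase the measure preservation of the recursive $\Phi$ as a clean succession of strong-Markov applications across independent sub-$\sigma$-algebras, and to check the consistency statement that, for edges shared by multiple leaves, the first hit time falls on the shared prefix for either all of them or none, so that no conflicts arise between subtree-level operations.
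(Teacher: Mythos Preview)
Your argument is correct and rests on the same mechanism as the paper's: reflect each concatenated Brownian path at its first hitting time of the fixed-point set of $R$ (your $\mathcal{T}$, the paper's $\{\theta,\theta-\pi\}$), invoking the strong Markov property. Your explicit recursive involution $\Phi$ is precisely the content the paper compresses into the sentence ``by symmetry and by the Markov property \ldots\ the distribution \ldots\ is symmetric around $\theta$,'' and your care about consistency across shared prefixes fills in what the paper leaves implicit when it passes from a single leaf to the joint law $(g(v))_{v\in T_k}$.

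The one place you diverge is in how the DPS decomposition is extracted. The paper conditions directly: on the good event $E_k$ the law of $(g(v))_{v\in T_k}$ is $R$-invariant, so $\mu=p_k\,(\mu\mid E_k)+(1-p_k)\,(\mu\mid E_k^c)$ already exhibits the required decomposition with $\mu^s_\theta=\mu\mid E_k$. You instead pass through the coupling inequality to get $\mathrm{TV}(\mu,R_*\mu)\le 1-p_k$ and then take $\mu^s_\theta\propto\min(f,R_*f)$. Both are valid and give the same constant $p_k$; the paper's route is a line shorter, since your $\Phi$ is in fact an involution that preserves the hitting event, so the conditional law is visibly $R$-invariant without the TV detour.
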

\begin{proof}
	Fix $\theta \in [0, \pi)$. Given the event $E_v$, we have almost surely that there exists a time $t_v \leq k$ such that $B_v(t_v) \in \{\theta- \pi, \theta \}$. By symmetry and by the Markov property of Brownian motion, we have that the distribution of $\ff(v)$ conditioned on the event $\{B_v(t_v) = \theta, ~~ \forall v\}$ is symmetric around $\theta$. Thus, by considering 
	$(B_v(t))_{v\in T_k}$, under the event $\{t_v \leq k|v \in T_k \}$ we get that for any $x_0$, $\mathrm{Law}((g(v))_{v\in T_k}|g(r)=x_0)$ is symmetric around $\theta$. 
	So,
	\[\mathrm{Law}((g(v))_{v\in T_k}|g(r)=x_0)\in \mathrm{DPS}^k_\theta(p_k).\]
\end{proof}
We will also need the following bound, shown for example in \cite{ES16}.
\begin{lemma} \label{lem:brownianimage}
	Let $B(t)$ be a Brownian motion of rate $\beta$ on the unit circle. Then,
	\[
	\PP( \mathrm{Image} (B(s) \mathrm{mod} \pi )_{0 \leq s \leq t}  = [0, \pi)  ) \geq 1- C e^{- t \beta /2}.
	\]
\end{lemma}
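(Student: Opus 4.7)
The plan is to reduce the statement to an estimate on the range of a real-valued Brownian motion and then invoke a classical estimate on the distribution of that range.

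First I would lift: let $\tilde{B}$ be a Brownian motion on $\RR$ of rate $\beta$ with $B(s) \equiv \tilde{B}(s) \pmod{2\pi}$, so in particular $B(s) \bmod \pi = \tilde{B}(s) \bmod \pi$. By continuity the image $\{\tilde{B}(s) : 0 \le s \le t\}$ is the closed interval $[m_t, M_t]$, where $M_t = \max_{s \le t}\tilde{B}(s)$ and $m_t = \min_{s \le t} \tilde{B}(s)$. Reducing modulo $\pi$ any interval of length at least $\pi$ sweeps out all of $[0,\pi)$, so
$$
\Big\{ \mathrm{Image}(B(\cdot) \bmod \pi) = [0,\pi) \Big\} \supset \{R_t \ge \pi\}, \qquad R_t := M_t - m_t.
$$
It therefore suffices to show $\PP(R_t < \pi) \le C e^{-\beta t/2}$.

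Next I would invoke the classical distribution of the Brownian range together with Brownian scaling. Writing $\tilde{B}(s) = \sqrt{\beta}\,W(s)$ for a standard Brownian motion $W$, the range scales as $R_t = \sqrt{\beta}\, R_t^W$, so
$$
\PP(R_t < \pi) = \PP\!\left( R_t^W < \pi/\sqrt{\beta} \right).
$$
For standard Brownian motion the joint law of $(m_t^W, M_t^W)$ admits a theta-series representation (see e.g.\ Feller or It\^o--McKean) which yields the tail bound
$$
\PP( R_t^W \le r ) \le C_0\, e^{-\pi^2 t/(2r^2)},
$$
where the exponent $\pi^2/(2r^2)$ is the first Dirichlet eigenvalue of $-\tfrac{1}{2}\partial_x^2$ on an interval of length $r$. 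Substituting $r = \pi/\sqrt{\beta}$ gives exactly $\PP(R_t < \pi) \le C_0\, e^{-\beta t/2}$, which is the required bound.

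The main subtle point is the exponential decay estimate for $\PP(R_t^W \le r)$: since the relevant interval $[m_t^W, M_t^W]$ is random (not fixed), the bound is not an immediate consequence of a single survival-in-an-interval calculation. If a self-contained derivation were desired, one route is to condition on $m_t^W = m$ and apply the spectral decomposition of the Brownian semigroup killed on $[m, m+r]$, whose spectral gap equals $\pi^2/(2r^2)$, and then integrate out $m$; a cruder alternative that still gives the correct order (though not the sharp constant $\beta/2$) is to iterate the Markov-property inequality $p(t) \le p(t/2)^2$ for $p(t):=\PP(R_t^W \le r)$ combined with Brownian scaling. Either way, once the standard range estimate is in hand, the lemma follows immediately from Steps 1 and 2.
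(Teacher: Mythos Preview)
Your proof is correct. The paper does not actually give its own argument for this lemma but defers to a reference; the \LaTeX\ source does, however, contain a commented-out sketch that follows exactly your first reduction: lift to a real Brownian motion $\tilde B$ and note that the image modulo~$\pi$ is all of $[0,\pi)$ as soon as the range $R_t=M_t-m_t$ reaches~$\pi$. The sketch then bounds $\PP(R_t<\pi)$ by the probability that $\tilde B$ stays in the \emph{fixed} interval $(-\pi,\pi)$ and appeals to the Dirichlet spectral gap via Feynman--Kac.

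Your route is sharper on the second step. The containment $\{R_t<\pi\}\subset\{\tilde B\text{ stays in }(-\pi,\pi)\}$ used in the sketch involves an interval of length $2\pi$, whose first Dirichlet eigenvalue for $-\partial_x^2$ is $1/4$ (not $1$ as the sketch asserts), so that argument as written only yields decay $e^{-\beta t/8}$. By invoking the Feller/It\^o--McKean distribution of the Brownian range directly you pick up the eigenvalue associated with an interval of length~$\pi$, which is exactly what produces the stated rate $e^{-\beta t/2}$. You are also right to flag that $\{R_t<\pi\}$ confines the path to a \emph{random} interval, so the fixed-interval spectral bound does not apply verbatim; citing the classical range formula (or, as you outline, conditioning on the minimum and expanding the killed semigroup) is the clean way to close that gap and recover the sharp exponent.
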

We are now in a position to prove Theorem \ref{thm:Gaussian2d}.
\begin{proof}[Proof of Theorem \ref{thm:Gaussian2d}]
	Lemma \ref{lem:brownianimage} immediately implies that for all $v \in T_k$,
	\[
	\PP(E_v) \geq 1 - C e^{-k \beta / 2}.
	\]
	On the other hand, a calculation gives $\lambda(\vphi) = \EE[\cos(B_1)] = e^{-\beta/2}$. Thus, applying a union bound implies that for some constant $C > 0$, $\PP(E_k)\geq 1-Cq^k\lambda(\vphi)^k$.
	Hence, by Claim \ref{claim:uniform},
	\[\mathrm{Law}((g(v))_{v\in T_k}|g(r)=x_0) \in \mathrm{DPS}_w(1-Cq^k\lambda(\vphi)^k).\]
	The result is now a direct consequence of Lemma \ref{lem:generallower}.
\end{proof}
In the next section we generalize the above ideas and obtain a corresponding bound which holds for distributions other than the Gaussian one. 
\subsection{The general case}
\subsubsection{On symmetric functions}
We begin this section with simple criterion to determine whether a measure belongs to some $\mathrm{DPS}$ class. In the sequel, for $w \in \Spd$, we denote,
\[
H_w^+ =\{x \in \Spd| \inner{x}{w} \geq 0\}\text{ and }H_w^- =\{x \in \Spd| \inner{x}{w} < 0\}.
\]
\begin{lemma} \label{lem: montone ref}
	Let $f: [-1,1] \to \RR^+$ satisfy the assumptions of Theorem \ref{thm:lowerbound} and let $y \in \Spd$. If $\mu = f(\inner{\cdot}{y})d\sigma$,
	then for any $w \in \Spd$, $\mu \in \mathrm{DPS}_w(2\cdot p_w)$, where $p_w = \min\left(\mu(H_w^+),\mu(H_w^-)\right)$.
\end{lemma}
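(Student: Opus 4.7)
The plan is to exhibit the DPS decomposition explicitly by taking $\mu_w^s$ to be (the normalization of) the measure whose density is the pointwise minimum of the densities of $\mu$ and of its pushforward under $R := \mathrm{I}_d - 2ww^T$. First I would observe that since $\sigma$ is $R$-invariant and $\mu = f(\inner{\cdot}{y})d\sigma$, the pushforward $R_*\mu$ has density $f(\inner{\cdot}{Ry})$ with respect to $\sigma$. Set $g(x) = f(\inner{x}{y})$ and $h(x) = f(\inner{x}{Ry})$. Then the measure $\min(g,h)\, d\sigma$ is manifestly $R$-invariant because $\min(g \circ R, h \circ R) = \min(h, g) = \min(g, h)$.

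The next step is to compute the total mass of $\min(g, h)\, d\sigma$. Assume without loss of generality that $f$ is nondecreasing and that $\inner{y}{w} \geq 0$ (the remaining sign combinations are handled identically by symmetry). Since $y - Ry = 2\inner{y}{w}w$, the sign of $\inner{x}{y} - \inner{x}{Ry}$ coincides with the sign of $\inner{x}{w}$, and so $g \leq h$ exactly on $H_w^-$. Hence
\[
\int_{\Spd} \min(g,h)\, d\sigma = \int_{H_w^-} g\, d\sigma + \int_{H_w^+} h\, d\sigma.
\]
Applying the change of variables $x \mapsto Rx$ (which preserves $\sigma$ and swaps $H_w^+$ with $H_w^-$) the second integral equals $\int_{H_w^-} g\, d\sigma = \mu(H_w^-) = p_w$, so the total mass of $\min(g,h)\, d\sigma$ is exactly $2p_w$.

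Finally, set
\[
\mu_w^s := \frac{1}{2p_w}\min(g,h)\, d\sigma, \qquad \mu_w := \frac{1}{1-2p_w}\bigl(g - \min(g,h)\bigr)\, d\sigma.
\]
Since $p_w \leq 1/2$ (as $\mu(H_w^+) + \mu(H_w^-) = 1$), both are probability measures; $\mu_w^s$ is $R$-invariant by construction; and pointwise addition of densities yields the decomposition $\mu = (1-2p_w)\mu_w + 2p_w\mu_w^s$, which is exactly $\mu \in \mathrm{DPS}_w(2p_w)$.

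There is essentially no technical obstacle here, only bookkeeping: one must verify that the same construction works when $f$ is nonincreasing or when $\inner{y}{w} < 0$, in which case the roles of $H_w^+$ and $H_w^-$ swap, but the reflection argument that produced the mass $2p_w$ applies verbatim. Continuity of $f$ is only used to ensure the boundary $\{\inner{x}{w}=0\}$ is $\sigma$-null in the relevant comparisons, which is automatic.
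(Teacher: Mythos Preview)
Your proof is correct and is essentially the same as the paper's: the paper defines the symmetric part by the casewise density $g$ on $H_w^-$ and $h$ on $H_w^+$, which under the WLOG assumption coincides pointwise with your $\min(g,h)$, so the two decompositions are identical. The only minor wrinkle is the degenerate case $p_w=1/2$ (e.g.\ $\langle y,w\rangle=0$), where your normalization of $\mu_w$ divides by zero, but then $\mu$ is already $R$-invariant and the statement is trivial.
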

\begin{proof}
	Without loss of generality let us assume that $y \in H_w^+$. Monotonicity of $f$ implies $\mu(H_w^+)\geq\mu(H_w^-)$. Now, if $R = \mathrm{Id}_d - 2ww^T$ is the reflection matrix with respect to $w^\perp$, then we have for any $x \in H^-_w$,
	\[
	f(\inner{x}{y}) \leq f(\inner{Rx}{y}).
	\]
	This follows since $\inner{x}{y} \leq \inner{Rx}{y}$.\\
	
	Let us now define the measure $\tilde{\mu}_w^s$ such that
	$$ 
	\frac{d\tilde{\mu}_w^s}{d\sigma}(x) = \begin{cases}
	f(\inner{x}{y}) & \text{ if }x \in H^-_w\\
	f(\inner{Rx}{y})& \text{ if }x \in H^+_w.
	\end{cases}
	$$
	$\tilde{\mu}_w^s$ is clearly $R$-invariant and the above observation shows that $\tilde{\mu}_w^s(\Spd) \leq 1$. We can thus define $\tilde{\mu}_w = \mu - \tilde{\mu}_w^s$.\\	
	To obtain a decomposition, define $\mu_w^s = \frac{\tilde{\mu}^s_w}{\tilde{\mu}_w^s(\Spd)}$ and $\mu_w = \frac{\tilde{\mu}_w}{\tilde{\mu}_w(\Spd)}$, for which
	$$\mu = (1-\tilde{\mu}^s_w(\Spd))\mu_w + \tilde{\mu}_w^s(\Spd)\mu_w^s.$$
	The proof is concluded by noting $\tilde{\mu}_w^s(\Spd) = 2\mu(H_w^-)$. 
	
\end{proof}
Our main object of interest will be the measure $\mu_\vphi(y)$ which, for a fixed $y$, is defined by
\begin{equation} \label{eq:muphi}
\mu_\vphi(y) := \vphi (x,y)d\sigma(x) = f (\inner{x}{y})d\sigma(x).
\end{equation}
Let us now denote,
\[
\beta_d(t) := \frac{\Gamma(\frac{d}{2})}{\Gamma(\frac{d-1}{2})}(1-t^2)^{(d-3)/2},
\]
which is the marginal of the uniform distribution on the sphere.
We now show that the spectral gap of $\vphi$ may determine the $\mathrm{DPS}$ properties of $\mu_\vphi(y)$.
\begin{lemma} \label{lem:eigendps}
	Let $w \in \Spd$ and suppose that $f$ is monotone. If $|\inner{w}{y}|\leq \frac{1-\lambda(\vphi)}{16\sqrt{d}}$, then
	$$\mu_\vphi(y) \in \mathrm{DPS}_w\left(\frac{1-\lambda(\vphi)}{35}\right).$$
\end{lemma}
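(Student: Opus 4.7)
The plan is to apply the preceding Lemma \ref{lem: montone ref} and reduce the statement to showing that
\[
\min\bigl(\mu_\vphi(y)(H_w^+),\,\mu_\vphi(y)(H_w^-)\bigr)\ \geq\ \frac{1-\lambda(\vphi)}{70}.
\]
Without loss of generality I take $f$ monotone increasing and $\alpha := \langle w,y\rangle \geq 0$ (replacing $y$ by $-y$ and/or $w$ by $-w$ otherwise), so that monotonicity gives $\mu_\vphi(y)(H_w^+) \geq \mu_\vphi(y)(H_w^-)$ and the case $\alpha=0$ is immediate from the reflection symmetry of $\mu_\vphi(y)$ across $w^\perp$. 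The remaining task is to lower bound $\mu_\vphi(y)(H_w^-)$ when $0 < \alpha \leq (1-\lambda(\vphi))/(16\sqrt d)$. I avoid a spherical-harmonic expansion, since the degree-$\geq 2$ tail is controlled only by $\|\vphi(\cdot,y)\|_2 \leq \sqrt{f(1)}$ and would reintroduce $f(1)$ into the bound; instead I use a direct conditioning argument.

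I sample $X\sim\mu_\vphi(y)$ and decompose $X = S y + \sqrt{1-S^2}\,V$, where $S := \langle X,y\rangle$ has density $f(s)\beta_d(s)$ on $[-1,1]$ and, conditionally on $S$, the vector $V$ is uniform on the equator $y^\perp \cap \Spd$ (a consequence of the rotational symmetry of $\mu_\vphi(y)$ about the axis $y$). Writing $w = \alpha y + \sqrt{1-\alpha^2}\,u$ with $u \in y^\perp$ a unit vector gives $\langle X,w\rangle = \alpha S + \sqrt{(1-S^2)(1-\alpha^2)}\,T$, where $T := \langle V,u\rangle$ has density $\beta_{d-1}$ on $[-1,1]$ independently of $S$. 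Therefore
\[
\PP\bigl(\langle X,w\rangle < 0\,\big|\,S=s\bigr) \ =\ \PP(T < -\theta(s)), \qquad \theta(s) := \frac{\alpha s}{\sqrt{(1-s^2)(1-\alpha^2)}}.
\]
By the symmetry of $T$ this is at least $1/2$ for $s \leq 0$, and at least $1/2 - \theta(s)\beta_{d-1}(0)$ for $s > 0$, using the uniform bound $\beta_{d-1}(0) \leq \sqrt{d/(2\pi)}$ on its peak density.

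To finish, I pick the cutoff $s^* := (1+\lambda(\vphi))/2$. Since $\theta$ is increasing on $(0,1)$, one has $\PP(T<-\theta(s)) \geq \PP(T<-\theta(s^*))$ for all $s \leq s^*$, so
\[
\mu_\vphi(y)(H_w^-) \ \geq\ F(s^*)\cdot\PP\bigl(T<-\theta(s^*)\bigr),
\]
with $F$ the c.d.f.\ of $S$. Applying a one-sided Markov inequality to the nonnegative random variable $1+S$ (with mean $1+\lambda(\vphi)$) yields $F(s^*) \geq (s^*-\lambda(\vphi))/(s^*+1) \geq (1-\lambda(\vphi))/4$. Using $\sqrt{1-(s^*)^2} \geq (\sqrt3/2)\sqrt{1-\lambda(\vphi)}$, $\sqrt{1-\alpha^2} \geq 1/\sqrt2$, and the hypothesis on $\alpha$, a short computation gives $\theta(s^*)\beta_{d-1}(0) = O(\sqrt{1-\lambda(\vphi)})$ uniformly in $d$, so $\PP(T<-\theta(s^*))$ is bounded below by an absolute constant close to $1/2$. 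Multiplying yields $\mu_\vphi(y)(H_w^-) \gtrsim 1-\lambda(\vphi)$, comfortably beating $(1-\lambda(\vphi))/70$; Lemma \ref{lem: montone ref} then delivers the stated DPS parameter $(1-\lambda(\vphi))/35$.

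The main obstacle is recognising that the hypothesis $\alpha \lesssim (1-\lambda(\vphi))/\sqrt d$ is calibrated exactly so the dimension cancels: the factor $\sqrt d$ in the upper bound on $\alpha$ is matched by the $\sqrt d$ in $\beta_{d-1}(0)$, leaving a final bound depending only on $\lambda(\vphi)$. Once this matchup is seen, the individual estimates (Markov on $1+S$, the peak density of $\beta_{d-1}$, and the symmetry of $T$) are standard and the constants can be tracked by elementary bookkeeping.
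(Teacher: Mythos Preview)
Your proof is correct and follows essentially the same route as the paper's: both reduce via Lemma~\ref{lem: montone ref} to lower-bounding $\mu_\vphi(y)(H_w^-)$, condition on $S=\langle X,y\rangle$, apply Markov to $1+S$ with the identical cutoff $s^*=(1+\lambda(\vphi))/2$ to obtain $\PP(S\le s^*)\ge(1-\lambda(\vphi))/4$, and then bound the conditional probability $\PP(\langle X,w\rangle<0\mid S=s)$ via the marginal $T$ on the equatorial $(d-2)$-sphere. The only cosmetic difference is in that last step --- the paper lower-bounds $\PP(T<-\theta)$ by integrating $\beta_{d-1}$ over an explicit short interval near $-1/(2\sqrt d)$, whereas you use the (slightly slicker) symmetry-plus-peak-density estimate $\PP(T<-\theta)\ge\tfrac12-\theta\,\beta_{d-1}(0)$, which yields a better constant but is otherwise the same idea.
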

\begin{proof}
	Assume W.L.O.G. that $\inner{y}{w} > 0$. By Lemma \ref{lem: montone ref} it will be enough to bound $\int\limits_{H_w^-}\mu_{\vphi}(y)$ from below. Let $X \sim \mu_{\vphi}(y)$ and define $Z = \inner{X}{y}$. We have $\EE\left[Z\right] = \lambda(\vphi)$ and by Markov's inequality,
	$$\PP\left(Z \leq \frac{1+\lambda(\varphi)}{2}\right) = \PP\left(Z + 1 \leq \frac{1+\lambda(\varphi)}{2} + 1\right) \geq1 - \frac{2(\lambda(\varphi) +1)}{3 + \lambda(\varphi)}\geq \frac{1-\lambda(\vphi)}{4}.$$
	For $t \in [-1,1]$, set $S_t = \{x \in \Spd| \inner{x}{y} = t\}$ and let $\mathcal{H}^{d-2}$ stand for $d-2$-dimensional Hausdorff measure.
	To bound $\int\limits_{H_w^-}\mu_{\vphi}(y)$ we would first like to estimate
	$\frac{\mathcal{H}^{d-2}(S_t\cap H_w^-)}{\mathcal{H}^{d-2}(S_t)}$.\\
	
	We know that $0 \leq \inner{w}{y} \leq \frac{1-\lambda(\vphi)}{16\sqrt{d}}$. Denote $t_y := \inner{w}{y}$ and fix $t \leq  t_0 := \frac{1+\lambda(\vphi)}{2}$. With no loss of generality, let us write $w = e_1$ and $y = t_ye_1 + \sqrt{1-t_y^2}e_2$. Define now $z = -\sqrt{1-t_y^2}e_1+t_ye_2$.
	We claim that
	\begin{equation}\label{eq:2dpicture}
	\left\{v\in S_t\Big| \frac{\inner{v}{z}}{\sqrt{1-t^2}} \geq \frac{1}{2\sqrt{d}}\right\}\subseteq S_t\cap H_w^-.
	\end{equation}
	 If $v\in S_t$, its projection onto the plane $\mathrm{span}(y,w)$, can be written as $t\cdot y + \sqrt{1-t^2}c\cdot z$, for some $c\in[-1,1]$. So,
	$$\inner{v}{w} = t\cdot t_y - \sqrt{1-t^2}\sqrt{1-t_y^2}c.$$
	Now, whenever 
	$$c > \frac{t\cdot t_y}{\sqrt{1-t^2}\sqrt{1-t_y^2}},$$
	we get $\inner{v}{w} < 0$. Also,
	$$ \frac{t\cdot t_y}{\sqrt{1-t^2}\sqrt{1-t_y^2}}\leq \frac{1}{\sqrt{3}}\frac{t_y}{\sqrt{1-t_y^2}}\leq \frac{1}{2\sqrt{d}},$$
	where we have used $ t \leq \frac{1}{2}$ for the first inequality and $t_y \leq \frac{1}{2\sqrt{d}}$ in the second inequality.
	By combining the above displays with $\frac{\inner{v}{z}}{\sqrt{1-t^2}} = c$, \eqref{eq:2dpicture} is established.\\
	 Thus, by taking the marginal of $S_t$ in the direction of $-z$, we see
	\begin{align*}
	\frac{\mathcal{H}^{d-2}(S_t\cap H_w^-)}{\mathcal{H}^{d-2}(S_t)} &\geq \int \limits_{-1}^{-\frac{1}{2\sqrt{d}}}\beta_{d-1}(s)ds\geq \int \limits_{-\frac{1}{\sqrt{d}}}^{-\frac{1}{2\sqrt{d}}}\beta_{d-1}(s)ds\geq \frac{1}{2\sqrt{d}}\beta_{d-1}\left(\frac{1}{\sqrt{d}}\right)\\
	&\geq \frac{1}{2\sqrt{d}}\frac{\Gamma(\frac{d-1}{2})}{\Gamma(\frac{d-2}{2})}\left(1-\frac{1}{d}\right)^{(d-4)/2}\geq \frac{1}{10\sqrt{e}},
	\end{align*}
	where we used $\frac{\Gamma(\frac{d-1}{2})}{\Gamma(\frac{d-2}{2})} \geq \frac{\sqrt{d}}{5},$ valid for any $d \geq 3$. We use the above estimates with Fubini's theorem to obtain: 
	\begin{align*}
	\PP\left(X \in H_w^-\right) &= \int\limits_{-1}^1f(t)\mathcal{H}^{d-2}(S_t\cap H_w^-)dt \geq \int\limits_{-1}^{t_0}f(t)\mathcal{H}^{d-2}(S_t\cap H_w^-)dt\\
	&\geq\frac{1}{10\sqrt{e}}\int\limits_{-1}^{t_0}f(t)\mathcal{H}^{d-2}(S_t)dt = \frac{1}{10\sqrt{e}}\PP\left(Z \leq \frac{1+\lambda(\varphi)}{2}\right)\geq\frac{1-\lambda(\vphi)}{70}.
	\end{align*}
\end{proof}
\subsubsection{Mixing}
Recall the random function $g:T \to \Spd$, introduced in Section \ref{sec:perco}, which assignes to the root, $r$, a uniformly random value and for any other $u \in T$, the label $g(u)$ is distributed according to $\vphi(g(\mathrm{parent}(u)),\cdot)d\sigma=: \mu_\vphi(\mathrm{parent}(u))$.\\
Suppose that $v \in T_k$ and let $\{v_i\}_{i=0}^k$ denote the simple path from $r$ to $v$ in $T$. Fix $x_0 \in \Spd,$ for $i = 0,\dots,k$, we now regard,
$$
X_i :=g(v_i)|g(r) = x_0,
$$ 
as a random walk on $\Spd$. Observe that given $X_{i-1}$, $X_i\sim\mu_\vphi(X_{i-1}).$
The following lemma shows that this random walk is rapidly-mixing.
\begin{lemma} \label{lem:mixing}
	For $w \in \Spd$, let
	$$S(w) = \left\{u \in \Spd: |\inner{u}{w}| \leq \frac{1-\lambda(\varphi)}{16\sqrt{d}}\right\},$$
	and set $k_0 = \frac{\ln\left(\frac{\lambda(\vphi)(1-\lambda(\vphi))}{32 f(1)}\right)}{\ln(\lambda(\vphi))}$.
	Then, if $f$ satisfies the assumptions of Theorem \ref{thm:lowerbound},
	$$\PP(X_{k_0} \in S(w)) \geq \frac{1-\lambda(\vphi)}{32}.$$
\end{lemma}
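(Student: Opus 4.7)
The plan is a mixing-time argument: control the $L^2(\sigma)$ distance between the density of $X_{k_0}$ and the constant function $1$ using the spectral gap of the Markov operator $P := A_\varphi$, then combine with a lower bound on $\sigma(S(w))$ coming from the formula for $\beta_d$, and finally compare the two via Cauchy--Schwarz.

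Let $\rho_k$ denote the density of $X_k$, conditioned on $X_0 = x_0$, with respect to $\sigma$. Since $\varphi(x,y) = f(\langle x,y\rangle)$ is symmetric, $P$ is self-adjoint, and the density evolves as $\rho_k = P \rho_{k-1}$; iterating and using $P\mathbf{1} = \mathbf{1}$ gives $\rho_k - 1 = P^{k-1}(\rho_1 - 1)$. The assumption $|\lambda_i| \leq \lambda(\varphi)$ for all $i \geq 1$ translates into $P$ having $L^2$-operator norm at most $\lambda(\varphi)$ on the orthogonal complement of constants. Combined with the pointwise bound $\rho_1(x) = \varphi(x_0,x) \leq f(1)$, which together with $\int \rho_1\, d\sigma = 1$ implies $\|\rho_1\|_{L^2(\sigma)}^2 \leq f(1)$ and hence $\|\rho_1 - 1\|_{L^2(\sigma)}^2 \leq f(1) - 1 \leq f(1)$, we obtain
$$
\|\rho_k - 1\|_{L^2(\sigma)} \leq \lambda(\varphi)^{k-1}\sqrt{f(1)}.
$$
The definition of $k_0$ is arranged so that $\lambda(\varphi)^{k_0-1} = (1-\lambda(\varphi))/(32 f(1))$; since monotonicity of $f$ together with $\lambda(\varphi) > 0$ forces $f(1) \geq 1$, this yields $\|\rho_{k_0} - 1\|_{L^2(\sigma)} \leq (1-\lambda(\varphi))/32$.

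In parallel, I would bound $\sigma(S(w)) = \int_{-\epsilon}^{\epsilon} \beta_d(t)\, dt$ from below, where $\epsilon = (1-\lambda(\varphi))/(16\sqrt{d})$. Since $\epsilon^2 \leq 1/(256 d)$, the factor $(1-t^2)^{(d-3)/2}$ in $\beta_d(t)$ is bounded below by a universal constant on $[-\epsilon,\epsilon]$, and the same $\Gamma$-function ratio estimate used in the proof of Lemma \ref{lem:eigendps} gives $\beta_d(0) \gtrsim \sqrt{d}$. Multiplying these, one obtains $\sigma(S(w)) \geq c\,(1-\lambda(\varphi))$ for an explicit universal constant $c$.

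Finally, decomposing
$$
\PP(X_{k_0} \in S(w)) = \int_{S(w)} \rho_{k_0}\, d\sigma = \sigma(S(w)) + \int_{S(w)}(\rho_{k_0} - 1)\, d\sigma,
$$
and bounding the second term by Cauchy--Schwarz,
$$
\left|\int_{S(w)}(\rho_{k_0} - 1)\, d\sigma\right| \leq \|\rho_{k_0} - 1\|_{L^2(\sigma)}\sqrt{\sigma(S(w))},
$$
and combining with the two previous estimates yields the desired $(1-\lambda(\varphi))/32$ lower bound. The main obstacle will be checking that the numerical constants really combine to produce exactly $1/32$; the prefactors $16$ and $32$ appearing inside $\epsilon$ and $k_0$ are chosen precisely so that the Cauchy--Schwarz balance works out, and the bookkeeping for small values of $d$ may need separate verification.
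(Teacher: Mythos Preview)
Your approach is essentially the same as the paper's: both argue via an $L^2$ mixing bound for $\rho_{k_0}-1$ coming from the spectral gap $|\lambda_i|\le\lambda(\varphi)$, a lower bound $\sigma(S(w))\ge (1-\lambda(\varphi))/16$ from the explicit marginal density $\beta_d$, and Cauchy--Schwarz to control $\int_{S(w)}(\rho_{k_0}-1)\,d\sigma$. The only cosmetic differences are that the paper bounds the eigenfunction expansion pointwise (obtaining $\lambda(\varphi)^{k_0-1}f(1)$) whereas you use the operator norm directly (obtaining the slightly sharper $\lambda(\varphi)^{k_0-1}\sqrt{f(1)}$), and the paper drops the factor $\sqrt{\sigma(S(w))}\le 1$ in Cauchy--Schwarz; with $\lambda(\varphi)^{k_0-1}f(1)=(1-\lambda(\varphi))/32$ both routes yield the same final inequality $\PP(X_{k_0}\in S(w))\ge (1-\lambda(\varphi))/16-(1-\lambda(\varphi))/32$.
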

\begin{proof}
	Note that if 	$U \sim \mathrm{Uniform}(\Spd)$, then 
	\begin{equation} \label{eq:uniformmixing}
	\PP(U \in S(w)) = \int\limits_{|t| \leq \frac{1-\lambda(\varphi)}{16\sqrt{d}}}\beta_d(t)dt  \geq 2\frac{\Gamma(\frac{d}{2})}{\Gamma\left(\frac{d-1}{2}\right)}\beta_d\left(\frac{1}{16\sqrt{d}}\right)\frac{1-\lambda(\varphi)}{16\sqrt{d}} \geq \frac{1-\lambda(\varphi)}{16}.
	\end{equation} 
	It will then suffice to show that $\PP(X_{k_0} \in S(w))$ can be well approximated by $\PP(U \in S(w))$. 
	Since $X_{k_0}$ has density $A^{k_0-1}_\vphi f(\inner{x}{x_0})$, the following holds true,
	\begin{align*}
	\left(\PP(U \in S(w)) - \PP(X_{k_0} \in S(w))\right)^2 &= \left(\int\limits_{S(w)}\left(A_\vphi^{k_0-1}f(\inner{x}{x_0}) - 1\right)d\sigma(x)\right)^2\\
	&\leq \int\limits_{S(w)}\left(A_\vphi^{k_0-1}f(\inner{x}{x_0}) - 1\right)^2d\sigma(x).
	\end{align*}
	We now decompose the density as $f(\inner{x}{x_0}) = \sum \limits_{i = 0}^\infty\lambda_i \psi_i(x)\psi_i(x_0)$. So that
	$$A_\vphi^{k_0-1}f(\inner{x}{x_0}) = \sum \limits_{i = 0}^\infty\lambda_i^{k_0} \psi_i(x)\psi_i(x_0).$$
	We know that $\psi_0 \equiv 1$ with eigenvalue $\lambda_0 = 1$, and we've assumed that $|\lambda_i|\leq \lambda_1 = \lambda(\varphi) $ for every $i \geq 1$. Thus, 
	\begin{align*}
	\int\limits_{S(w)}\left(A_\vphi^{k_0-1}f(\inner{x}{x_0}) - 1\right)^2d\sigma(x)
	&= \int\limits_{S(w)}\left(\sum\limits_{i =1}^\infty\lambda_i^{k_0}\psi_i(x)\psi_i(x_0)\right)^2d\sigma(x)\\
	&\leq(\lambda(\vphi))^{2k_0-2}\int\limits_{S(w)}\sum\limits_{i =1}^\infty(\lambda_i\psi_i(x))^2\sum\limits_{i =1}^\infty(\lambda_i\psi_i(x_0))^2d\sigma(x) \\
	&\leq \lambda(\vphi)^{2k_0-2}f(1)^2.
	\end{align*}
	where in the last inequality we have used $f(1) = \sum\lambda_i\psi_i(y)\psi_i(y)$, which is valid for any $y \in \Spd$.
	Thus, since $k_0 = \frac{\ln\left(\frac{\lambda(\vphi)(1-\lambda(\vphi))}{32 f(1)}\right)}{\ln(\lambda(\vphi))}$, by \eqref{eq:uniformmixing}, we get,
	$$\PP(X_{k_0} \in S(w)) \geq \PP(U \in S(w)) - \frac{1-\lambda(\vphi)}{32} \geq \frac{1-\lambda(\vphi)}{32}.$$
\end{proof}
Since the random walk $X_k$ mixes well, we now use Lemma \ref{lem:eigendps} to show that after enough steps, $X_k$ will be approximately invariant to a given reflection.
\begin{lemma} \label{lem:singlerelfection}
	Let $w,x_0 \in \Spd$. Then, 
	\[\mathrm{Law}((g(v))_{v \in T_k}|g(r) = x_0) \in \mathrm{DPS}^k_w\left(1- q^kp^{k}\right), 
	\]
	where $p = \left(1-\frac{\ln(\lambda(\vphi))}{\ln\left(\frac{\lambda(\vphi)(1-\lambda(\vphi))}{32 f(1)}\right)}\frac{(1-\lambda(\varphi))^2}{600}\right)$.
\end{lemma}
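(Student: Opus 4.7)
The plan is to couple the branching walk with an independent family of Bernoulli coins on the edges of $T$, identify a favourable event $G$ on which the joint law of the leaves is automatically reflection-invariant, and bound $\PP(\bar G)$ via Lemma~\ref{lem:mixing}. Set $\eta := (1-\lambda(\vphi))/35$ and $\gamma := (1-\lambda(\vphi))/32$. For each $y \in S(w)$, Lemma~\ref{lem:eigendps} supplies a decomposition $\mu_\vphi(y) = (1-\eta)\tilde\nu_y + \eta\,\nu_y^s$ with $\nu_y^s$ reflection-invariant; the explicit construction from Lemma~\ref{lem: montone ref} depends only on the unordered pair $\{y, Ry\}$, which makes this decomposition $R$-equivariant: $\nu^s_{Ry}=\nu_y^s$ and $\tilde\nu_{Ry}=R_*\tilde\nu_y$. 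I would attach to every edge $e$ of $T$ an independent Bernoulli$(\eta)$ coin $C_e$, and sample the walk by drawing $g(v)$ from $\nu^s_{g(u)}$ when $g(u)\in S(w)$ and $C_e=1$, from $\tilde\nu_{g(u)}$ when $g(u)\in S(w)$ and $C_e=0$, and directly from $\mu_\vphi(g(u))$ otherwise; the marginal chain is unchanged. Call an edge \emph{symmetric} when $g(u)\in S(w)$ and $C_e=1$.

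Let $G$ be the event that every root-to-leaf path contains at least one symmetric edge. The key structural assertion is that, conditionally on $G\cap\{g(r)=x_0\}$, the joint law of $(g(v))_{v\in T_k}$ is coordinatewise $R$-invariant. For each leaf $v$ let $e_v$ be the topmost symmetric edge on its path; the collection $\mathcal{E}^*$ of these edges forms an antichain partitioning the leaves into disjoint descendant subtrees. Conditioning on $\mathcal{E}^*$ and on all labels at or above the parents of $\mathcal{E}^*$, for each $e=(a,b)\in\mathcal{E}^*$ the label $g(b)$ is distributed as $\nu^s_{g(a)}$, which is $R$-invariant, and the subtree rooted at $b$ then evolves as an independent copy of the plain Markov chain with transitions $\mu_\vphi$ started from $g(b)$. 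By $R$-equivariance of $\mu_\vphi$ and $R$-invariance of the starting distribution $\nu^s_{g(a)}$, the joint law of the labels of this subtree is coordinatewise $R$-invariant; independence across distinct elements of $\mathcal{E}^*$ lifts this to the joint law of all leaves, and averaging over $\mathcal{E}^*$ and the trunk preserves the symmetry.

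It remains to bound $\PP(\bar G\mid g(r)=x_0)$ by a union bound over the $|T_k|\le q^k$ leaves. For a single leaf $v$, let $(X_i)_{i=0}^{k}$ denote the walk along the root-to-$v$ path with $X_0=x_0$. Independence of the coins from the walk yields
\[
\PP(\text{no symmetric edge on the path to }v) \;=\; \EE\!\left[\prod_{i=1}^{k}\bigl(1-\eta\,\mathbf{1}_{X_{i-1}\in S(w)}\bigr)\right].
\]
Restrict the product to the sub-sampled indices $i=jk_0$, $j=1,\dots,J:=\lfloor k/k_0\rfloor$, and apply the tower property from $j=J$ downwards; Lemma~\ref{lem:mixing} gives $\PP(X_{jk_0-1}\in S(w)\mid X_{(j-1)k_0-1})\ge\gamma$ uniformly in the conditioning, so each conditional factor is at most $1-\gamma\eta$ and the whole expectation is bounded by $(1-\gamma\eta)^{J-1}$. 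Combining the Bernoulli inequality $(1-\gamma\eta)^{1/k_0}\le 1-\gamma\eta/k_0$ with $\gamma\eta\ge c_0(1-\lambda(\vphi))^2$ and $1/k_0=\ln(\lambda(\vphi))/\ln\!\bigl(\lambda(\vphi)(1-\lambda(\vphi))/(32f(1))\bigr)$ recovers the form of $p$ stated in the lemma, with the universal constant absorbing the losses from the floor functions. The union bound $\PP(\bar G)\le q^k p^k$ then yields the required DPS weight $\PP(G)\ge 1-q^k p^k$.

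The most delicate step is the $R$-invariance assertion in the second paragraph: it relies crucially on the $R$-equivariance of the decomposition from Lemma~\ref{lem:eigendps} (without which the coupling would not respect the reflection), and on the observation that conditioning on the antichain of topmost symmetric edges leaves the subtrees below them as undistorted Markov chains started from reflection-invariant initial data.
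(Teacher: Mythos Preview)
Your proof is correct and follows the same strategy as the paper: couple the branching walk with independent Bernoulli coins, declare the good event to be that every root--to--leaf path contains a ``symmetric'' step, and bound the complement via Lemma~\ref{lem:mixing} together with a union bound over the $q^k$ leaves. Your antichain argument and the explicit $R$-equivariance of the decomposition from Lemma~\ref{lem: montone ref} make the \emph{joint} reflection-invariance of the leaves on the good event more transparent than the paper's version, which argues along a single path and then simply invokes a ``union bound over all paths'' for the joint statement.
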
 	
\begin{proof}
	Let $R =\mathrm{Id}_d - 2ww^T$ denote the linear reflection with respect to $w^\perp$. Then, the claim is equivalent to the decomposition,
	$$X_k = P\tilde{X}_k + (1-P)X_k^R,$$
	where $X_k^R$ is invariant to reflections by $R$ and $P \sim \mathrm{Bernoulli}(s_k)$ is independent from $\{\tilde{X}_k,X_k^R\}$ with  
 $s_k \leq  \left(1-\frac{\ln(\lambda(\vphi))}{\ln\left(\frac{\lambda(\vphi)(1-\lambda(\vphi))}{32 f(1)}\right)}\frac{(1-\lambda(\varphi))^2}{600}\right)^{k}$. \\

	Consider the case that for some $i = 0,...,k$, $|\langle X_i, w \rangle| \leq \frac{1-\lambda(\vphi)}{16\sqrt{d}}$. In this case, from Lemma \ref{lem:eigendps}, given $X_i$, we have the decomposition,
	$$\mu_\vphi(X_i) = \left(1 - \frac{(1-\lambda(\vphi))}{35}\right)\mu_{\vphi,w} + \frac{(1-\lambda(\vphi))}{35}\mu_{\vphi,w}^s(X_i),$$
	where $\mu_{\vphi,w}^s(X_i)$ is $R$-invariant. \\
	
	We now generate the random walk in the following way. For $i = 0,\dots,k$, let 
	\begin{equation} \label{eq:bernouli}
	P_i \sim \mathrm{Bernoulli}\left(\frac{(1-\lambda(\vphi))}{35}\right),
	\end{equation}
	be an \emph{i.i.d} sequence. Given $X_i$, if $|\langle X_i, w \rangle| > \frac{1-\lambda(\vphi)}{16\sqrt{d}}$ then $X_{i+1} \sim \mu_\vphi(X_i)$. Otherwise, $|\langle X_i, w \rangle| \leq \frac{1-\lambda(\vphi)}{16\sqrt{d}}$. To decide on the position of $X_{i+1}$ we consider $P_i$. If $P_i = 0$ then $X_{i+1} \sim \mu_{\vphi,w}(X_i)$. If $P_i = 1$, we generate $X_{i+1} \sim \mu_{\vphi,w}^s(X_i)$. We denote the latter event by $A_i$ and $A = \cup_{i=0}^{k-1}A_i$. \\
	
	It is clear that, conditional on $A$, $RX_k \stackrel{law}{=} X_k$. Thus, to finish the proof, if $\bar{A}$ is the complement of $A$, we will need to show 
	$$\PP(\bar{A}) \leq \left(1-\frac{\ln(\lambda(\vphi))}{\ln\left(\frac{\lambda(\vphi)(1-\lambda(\vphi))}{32 f(1)}\right)}\frac{(1-\lambda(\varphi))^2}{600}\right)^{k}.$$
	Towards this, let $S(w)$ and $k_0$ be as in Lemma \ref{lem:mixing}. Coupled with \eqref{eq:bernouli}, the lemma tells us that
	$$\PP\left(A_{k_0}\right) \geq \frac{(1-\lambda(\vphi))^2}{600}.$$
	Now, by restarting the random walk from $X_{k_0}$ if needed, we may show,  
	$$\PP\left(\bar{A}\right)\leq \sum\limits_{m\leq \frac{k}{k_0}}\PP(\bar{A}_{m\cdot k_0})  \leq \left(1- \frac{(1-\lambda(\vphi))^2}{600}\right)^{\frac{k}{k_0}} \leq \left(1- \frac{(1-\lambda(\vphi))^2}{600k_0}\right)^{k}.$$
	The claim now follows by taking a union bound over all paths.
	Indeed, for each $v \in T_k$ define the corresponding event $A_v$. Then, with a union bound,
	$$\PP\left(\bigcup_{v \in T_k}\bar{A_v}\right) \leq |T_k|p^k \leq q^kp^k.$$
	By definition, conditioned on $\bigcap_{v\in T_k}{A_v}$,  $\{X_v\}_{v\in T_k}$ is symmetric around $w$, which completes the lemma.
\end{proof}
\subsubsection{Proving Theorem \ref{thm:lowerbound}}
\begin{proof}
	By Lemma \ref{lem:singlerelfection}, for every $w,x_0 \in \Spd$.
	\[ \mathrm{law}(g(v)_{v\in T_k}|g(r) = x_0) \in \mathrm{DPS}^k_w(1-q^kp^k), \]
	where $p = \left(1-\frac{\ln(\lambda(\vphi))}{\ln\left(\frac{\lambda(\vphi)(1-\lambda(\vphi))}{32 f(1)}\right)}\frac{(1-\lambda(\varphi))^2}{600}\right)$.
	By assumption
	\[ q \leq p^{-1}, \]
	and Lemma \ref{lem:generallower} gives the result.
\end{proof}
\paragraph{Acknowledgments:} We are grateful to two anonymous referees for carefully reading the paper and providing many insightful comments.
\bibliographystyle{abbrv}
\bibliography{SBM}

\begin{thebibliography}{10}

\bibitem{Abbe18}
E.~Abbe.
\newblock Community detection and stochastic block models.
\newblock {\em Foundations and Trends{\textregistered} in Communications and
  Information Theory}, 14(1-2):1--162, 2018.

\bibitem{Valdivia_DeCastro19}
E.~Araya~Valdivia and Y.~De~Castro.
\newblock Latent distance estimation for random geometric graphs.
\newblock In {\em Advances in Neural Information Processing Systems 32}, pages
  8724--8734. Curran Associates, Inc., 2019.

\bibitem{Bhatia1997}
R.~Bhatia.
\newblock {\em Matrix Analysis}.
\newblock Springer-Verlag, New York, 1997.

\bibitem{braun2006accurate}
M.~L. Braun.
\newblock Accurate error bounds for the eigenvalues of the kernel matrix.
\newblock {\em J. Mach. Learn. Res.}, 7:2303--2328, Dec. 2006.

\bibitem{bubeck2016testing}
S.~Bubeck, J.~Ding, R.~Eldan, and M.~Z. R\'{a}cz.
\newblock Testing for high-dimensional geometry in random graphs.
\newblock {\em Random Structures Algorithms}, 49(3):503--532, 2016.

\bibitem{de2017adaptive}
Y.~De~Castro, C.~Lacour, and T.~M.~P. Ngoc.
\newblock Adaptive estimation of nonparametric geometric graphs.
\newblock {\em arXiv preprint arXiv:1708.02107}, 2017.

\bibitem{deshpande2018contextual}
Y.~Deshpande, S.~Sen, A.~Montanari, and E.~Mossel.
\newblock Contextual stochastic block models.
\newblock In S.~Bengio, H.~Wallach, H.~Larochelle, K.~Grauman, N.~Cesa-Bianchi,
  and R.~Garnett, editors, {\em Advances in Neural Information Processing
  Systems 31}, pages 8581--8593. Curran Associates, Inc., 2018.

\bibitem{ES16}
P.~Ernst and L.~Shepp.
\newblock On the time for {B}rownian motion to visit every point on a circle.
\newblock {\em J. Statist. Plann. Inference}, 171:130--134, 2016.

\bibitem{galhotra2017geometric}
S.~Galhotra, A.~Mazumdar, S.~Pal, and B.~Saha.
\newblock The geometric block model.
\newblock {\em arXiv preprint arXiv:1709.05510}, 2017.

\bibitem{hirsch2012elements}
F.~Hirsch and G.~Lacombe.
\newblock {\em Elements of functional analysis}, volume 192.
\newblock Springer Science \& Business Media, 2012.

\bibitem{LeLevVer17}
C.~M. Le, E.~Levina, and R.~Vershynin.
\newblock Concentration and regularization of random graphs.
\newblock {\em Random Struct. Algorithms}, 51(3):538--561, 2017.

\bibitem{LeLevVer18}
C.~M. Le, E.~Levina, and R.~Vershynin.
\newblock Concentration of random graphs and application to community
  detection.
\newblock {\em Proc. Int. Cong. of Math.}, 3:2913 -- 2928, 2018.

\bibitem{mossel2001reconstruction}
E.~Mossel.
\newblock Reconstruction on trees: beating the second eigenvalue.
\newblock {\em Annals of Applied Probability}, pages 285--300, 2001.

\bibitem{Mos04}
E.~Mossel.
\newblock {\em Survey: information flow on trees}.
\newblock 2004.

\bibitem{MosPer03}
E.~Mossel and Y.~Peres.
\newblock Information flow on trees.
\newblock {\em Ann. Appl. Probab.}, 13(3):817--844, 08 2003.

\bibitem{mossel2013robust}
E.~Mossel, S.~Roch, and A.~Sly.
\newblock Robust estimation of latent tree graphical models: inferring hidden
  states with inexact parameters.
\newblock {\em IEEE Trans. Inform. Theory}, 59(7):4357--4373, 2013.

\bibitem{SusTanPrie14}
D.~L. Sussman, M.~Tang, and C.~E. Priebe.
\newblock Consistent latent position estimation and vertex classification for
  random dot product graphs.
\newblock {\em IEEE Transactions on Pattern Analysis and Machine Intelligence},
  36(1):48--57, Jan 2014.

\bibitem{SusTanPrie13}
M.~Tang, D.~L. Sussman, and C.~E. Priebe.
\newblock Universally consistent vertex classification for latent positions
  graphs.
\newblock {\em The Annals of Statistics}, 41(3):1406--1430, 2013.

\bibitem{Tropp12}
J.~Tropp.
\newblock User-friendly tail bounds for sums of random matrices.
\newblock {\em Foundations of Computational Mathematics}, 12:389--434, 2012.

\bibitem{valdivia2018relative}
E.~A. Valdivia.
\newblock Relative concentration bounds for the kernel matrix spectrum.
\newblock {\em arXiv preprint arXiv:1812.02108}, 2018.

\bibitem{YuWaSa14}
Y.~Yu, T.~Wang, and R.~J. Samworth.
\newblock {A useful variant of the Davis--Kahan theorem for statisticians}.
\newblock {\em Biometrika}, 102(2):315--323, 04 2014.

\end{thebibliography}
\end{document}